\definecolor{myred}{HTML}{800606}
\definecolor{myblue}{HTML}{131D85}
\theoremstyle{plain}
\newtheorem{theorem}{Theorem}[section]
\newtheorem{proposition}[theorem]{Proposition}
\newtheorem{lemma}[theorem]{Lemma}
\theoremstyle{definition}
\newtheorem{definition}[theorem]{Definition}
\theoremstyle{remark}
\newtheorem{remark}[theorem]{Remark}
\definecolor{lightpurple}{rgb}{0.96, 0.96, 1}
\newcommand{\T}{\mathbb{T}}
\newcommand{\Z}{\mathbb{Z}}
\newcommand{\R}{\mathbb{R}}
\newcommand{\cA}{\mathcal{A}}
\newcommand{\cF}{\mathcal{F}}
\newcommand{\cG}{\mathcal{G}}
\newcommand{\cK}{\mathcal{K}}
\newcommand{\cL}{\mathcal{L}}
\newcommand{\cN}{\mathcal{N}}
\newcommand{\cP}{\mathcal{P}}
\newcommand{\cQ}{\mathcal{Q}}
\newcommand{\cU}{\mathcal{U}}
\newcommand{\bC}{\mathbb{C}}
\newcommand{\bN}{\mathbb{N}}
\newcommand{\bR}{\mathbb{R}}
\newcommand{\bT}{\mathbb{T}}
\newcommand{\dA}{\cA\left(\Omega;\bR^{d_a}\right)}
\newcommand{\dU}[1][u]{\cU\left(\Omega;\bR^{d_#1}\right)}
\newcommand{\dV}[1][u]{\cU\left(\Omega;\bR^{d_v}\right)}
\newcommand{\dVt}[1][u]{\cU\left(\left[0,\infty\right)\times\Omega;\bR^{d_v}\right)}
\newcommand{\dCTU}[1][u]{C\left([0,T]\times\dU[#1]\right)}
\newcommand{\dH}[2][s]{H^{#1}#2}
\newcommand{\dCTH}[2][s]{C\left([0,T]\times \dH[#1]{#2}\right)}
\newcommand{\set}[2]{{\left\{ #1 \,\middle|\, #2 \right\}}}
\newcommand{\slot}{{\,\cdot\,}}
\newcommand{\eps}{\epsilon}
\newcommand{\norm}[2][H^s]{\left\|#2\right\|_{#1}}
\begin{document}

\onecolumn
\icmltitle{Learning PDE Solution Operator for Continuous Modeling of Time-Series}

\setsymbol{equal}{*}

\begin{Authorlist}
\Author{Yesom Park}{equal}
\Author{Jaemoo Choi}{equal}
\Author{Changyeon Yoon}{equal}
\Author{Chang hoon Song}{}
\Author{Myungjoo Kang}{}
\end{Authorlist}
\Info
\vspace{20pt}

\correspondingauthor{Myungjoo Kang}{mkang@snu.ac.kr}
\let\thefootnote\relax\footnotetext{\textsuperscript{*}Equal contribution authors. Correspondence to: \tt <mkang@snu.ac.kr>.}

\begin{abstract}
Learning underlying dynamics from data is important and challenging in many real-world scenarios.
Incorporating differential equations (DEs) to design continuous networks has drawn much attention recently, however, most prior works make specific assumptions on the type of DEs, making the model specialized for particular problems. 
This work presents a partial differential equation (PDE) based framework which improves the dynamics modeling capability. 
Building upon the recent Fourier neural operator, we propose a neural operator that can handle time continuously without requiring iterative operations or specific grids of temporal discretization. A theoretical result demonstrating its universality is provided.
We also uncover an intrinsic property of neural operators that improves data efficiency and model generalization by ensuring stability.
Our model achieves superior accuracy in dealing with time-dependent PDEs compared to existing models. 
Furthermore, several numerical pieces of evidence validate that our method better represents a wide range of dynamics and outperforms state-of-the-art DE-based models in real-time-series applications.
Our framework opens up a new way for a continuous representation of neural networks that can be readily adopted for real-world applications.
\end{abstract}

\section{Introduction}
The modeling of time-evolving data plays an important role in various applications in our everyday lives, including climate forecasting \cite{schneider2001analysis, mudelsee2019trend}, medical sciences \cite{stoffer2012special, jensen2014temporal}, and finance \cite{chatigny2020financial, andersen2005volatility}.
Numerous deep learning architectures \cite{connor1994recurrent, hochreiter1997long, cho2014properties} have been developed to learn sequential patterns from time-series data.
In recent years, leveraging differential equations (DEs) to design continuous networks has attracted increasing attention, first sparked by neural ordinary differential equations (Neural ODEs; \citealt{chen2018neural}).
DEs that characterize the rates of change and interaction of continuously varying quantities have become the indispensable mathematical language to describe time-evolving real-world phenomena \cite{cannon2012evolution, sunden2016heat, black2019pricing}. 
By virtue of their ability to represent and predict the world around us, incorporating DEs into neural networks has reinvigorated research in continuous deep learning, offering the ability to handle irregular time-series \cite{rubanova2019latent, de2019gru, schirmer2022modeling}.

Despite their eminent success, Neural ODEs have yet to be successfully applied to complex and large-scale tasks due to the limitation of expressiveness of ODEs.
To respond to this drawback, several works have enhanced the expressiveness of Neural ODEs \cite{gholami2019anode, gu2021efficiently}.
Another line of work attempts to introduce more diverse differential equations, such as controlled DEs \cite{kidger2020neural}, delay DEs \cite{zhu2021neural, anumasa2021delay}, and integro DEs \cite{zappala2022neural}.
However, in real-world applications, we usually know very little about the underlying dynamics of time-evolving systems, such as how their states evolve in general and which differential equations they obey, how variables depend on each other, 
and how high derivatives it contains. Therefore, it is necessary to develop a model that can learn an extended class of differential equations that is able to cover more diverse applications \cite{holt2022neural}.

In this work, we propose a partial differential equation (PDE) based novel framework that can learn a broad range of time-evolving systems without prior knowledge of governing equations.
PDEs that enjoy relations between the various partial derivatives of multivariable states represent much general dynamics, including ODEs as a special case. 
As the underlying dynamics are unknown in real-world data, it should be oblivious to the knowledge of the underlying PDE structure and needs to be learned from the data.
To this end, we adopt Fourier neural operator (FNO; \cite{li2020fourier}) that 
automatically learns PDE solution operators in a completely data-driven way without prior information on the governing PDE.
Because FNO handles time in discrete representation, however, FNO is difficult to directly transfer to irregularly-sampled time-series commonly arising in real-world problems.
To render it more suitable for continuous time-series, we propose a continuous-time FNO, termed \emph{CTFNO}, that can treat time continuously without requiring a specific temporal grid.
We also demonstrate the representational power of CTFNO via rigorous theoretical proof of the universal approximation theorem.
Moreover, we present a property of neural operator that guarantees stability. As it leads to well-posed learning problems, the stabilization makes to model better at generalization.
A wide array of numerical evidence validates that CTFNO can flexibly capture diverse time-dependent systems, outperforming baseline models not only for PDEs but also for various dynamics.
Furthermore, our model provides superior performance on a wide array of real-world time-series data.

\section{Background}
\label{prelim}
\paragraph{Fourier Neural Operator} \label{sec:fno}
Let $\Omega\subset\mathbb{R}^n$ be a bounded domain. For a given input $a:\Omega\rightarrow \bR^{d_a}$, which could be any of source or initial functions, neural operators learn the corresponding solution $u:\Omega\rightarrow \bR^{d_u}$ to a governing PDE.
The solution to fairly general PDEs is represented as a convolution operator with a kernel $G:\mathbb{R}^n\rightarrow \mathbb{R}^{d_u\times d_a}$ called by a Green's function \cite{evans2010partial} as follows:         
\begin{equation}
u\left(x\right)=G\ast a\left(x\right)=\int_{\Omega}G\left(x-y\right)a\left(y\right)dy,\ \forall x\in \Omega.\label{eq:Green}
\end{equation}
Due to the shift-invariant nature of the Green's function, this solution operator can be efficiently computed through the Fourier transform, known as the convolution theorem \cite{bracewell1986fourier}.
This elucidates a way to design Fourier neural operator (FNO; \citealt{li2020fourier}).
The overall computational flow of FNO for approximating the convolution operator (\ref{eq:Green}) is given as 
\vspace{-2pt}
\[
a\overset{\cP}{\longrightarrow}v_0\overset{\cL_1}{\longrightarrow}v_1
\overset{\cL_2}{\longrightarrow}\cdots \overset{\cL_L}{\longrightarrow}v_L\overset{\cQ}{\longrightarrow}u
\]
for a given depth $L$. 
To increase expressiveness, the input function $a$ is lifted to a higher dimensional representation by
$v_{0}=\cP(a)(x)\coloneqq Pa(x)$ with a matrix $P\in\bR^{d_v\times d_a}$.
$\cQ$ is a projection operator of the form $\cQ(v)(x)\coloneqq Qv(x)$ for $Q\in\bR^{d_u\times d_v}$.
Fourier layers $\cL_\ell$ are defined as follows:
\begin{definition}\label{def:FNO}
(\textbf{Fourier layers} \cite{li2020fourier}) For a convolution kernel $\kappa_\ell:\bR^n\rightarrow\bR^{d_v\times d_v}$, a linear transform $W_\ell:\mathbb{R}^{d_v}\rightarrow\mathbb{R}^{d_v}$, and an activation function $\sigma :\bR\rightarrow\bR$, the $\ell$-th Fourier layer $\cL_\ell$ is defined as follows:
\begin{align}
\mathcal{L}_{\ell}\left(v\right)\left(x\right)&\coloneqq\sigma\left(W_{\ell}v\left(x\right)+k_{\ell}\ast v\left(x\right)\right) \\
& =\sigma\left(W_{\ell}v\left(x\right)+\mathcal{F}^{-1}\left(R_{\ell}\cdot\left(\mathcal{F}v\right)\right)\left(x\right)\right),\ \forall v:\Omega\rightarrow\bR^{d_v},\ x\in\Omega,
\end{align}
where $R_{\ell}=\mathcal{F}\left(\kappa_{\ell}\right)$ is directly learned, $\cF$ is Fourier transform, of which the inverse operator is denoted by $\cF^{-1}$.
\end{definition}

Both $\cF$ and $\cF^{-1}$ are implemented by fast Fourier transform \cite{nussbaumer1981fast} with truncated frequencies.

\paragraph{Treatment of time-varying problems}
When applied to time-dependent PDEs, the original FNO can only learn an operator that maps the initial function to a solution for a single fixed time. 
To deal with time-varying problems, two methods are suggested in \cite{li2020fourier}: FNO-RNN poses the time-dependent problem as a sequence-to-sequence task. 
But autoregressive training is often hard to train.
Alternatively, FNO-2D treats it as an $(n+1)-$dimensional problem by adding one more dimension and applying FNO layers to convolve in the space-time domain. 
In this case, the model can only predict solutions at times on a fixed equispaced temporal mesh.
In addition, it requires quite a few parameters because, for example, a one-dimensional problem is treated as a two-dimensional problem.

\section{Continuous-Time PDE Solution Operator} \label{sec:CTFNO}
\begin{figure*}[t]
    \centering
    \includegraphics[page=1,width=0.90\textwidth]{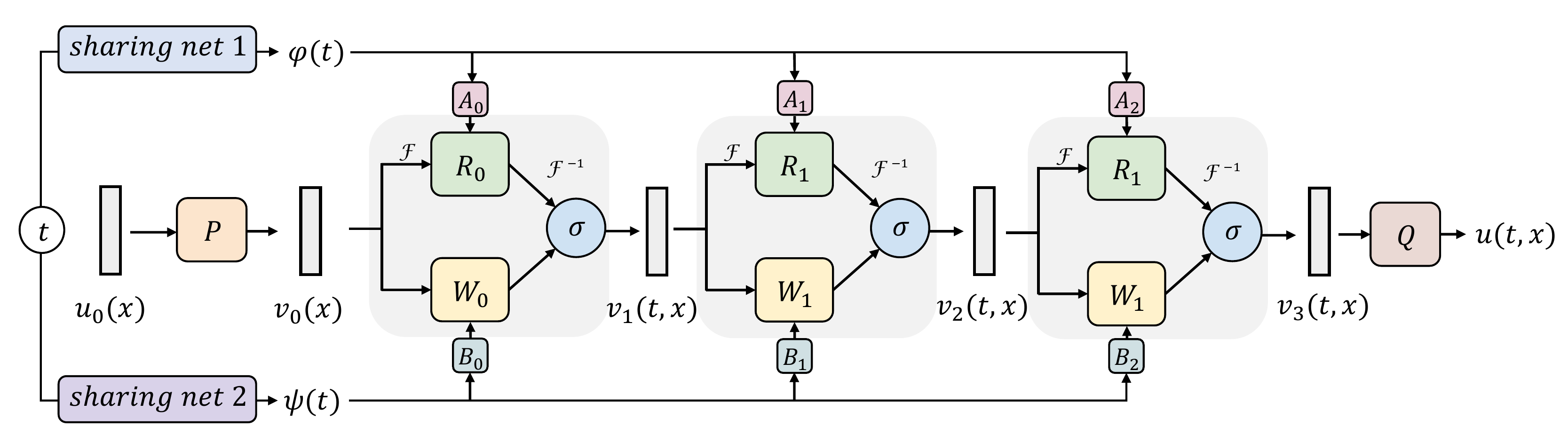}
    \caption{The visualization of CTFNO architecture. Two time embedding networks specify the time $t$ into hidden representations $\varphi(t)$ and $\psi(t)$.
    Then $\varphi(t)$ and $\psi(t)$ incorporate the temporal information into each Fourier layer through time modulation operators $A_\ell$ and $B_\ell$. See Appendix \ref{appen: network_detail} for details.}
    \label{fig:CTFNO}
    \vspace{-5pt}
\end{figure*}
\subsection{Continuous-Time FNO} \label{sec:tfno}
FNO has shown a promising ability to learn complex PDEs, however, it deals with time-evolving systems with iterative rollouts or on specific temporal grids as we discussed in the previous section. This notion of discrete time of FNO hinders its wider applicability to time-varying systems.
To ameliorate this limitation, we introduce a \textit{continuous-in-time Fourier neural operator (CTFNO)}.
The design of CTFNO is inspired by the Green's function formula for time-dependent PDEs, which says that there exists a Green's function $G:\left[0,\infty\right)\times\mathbb{R}^n\rightarrow \mathbb{R}^{d_u\times d_a}$ such that the solution for an initial condition $a\left(x\right)$ is represented by follows: 
\begin{equation}
u\left(t,x\right)=\int_{\Omega}G\left(t,x-y\right)a\left(y\right)dy,\ \forall \left(t,x\right) \in \left[0,\infty\right)\times\Omega.\label{eq:time_Green}
\end{equation}
For example, Green's function of the heat equation $u_{t}-\nu u_{xx}=0$, describing the temperature on a surface as a function of time, is $G\left(t,x-y\right)=\frac{1}{\sqrt{4\pi \nu t}}\exp\left(-\frac{\mid x-y\mid^2}{4\nu t}\right)$.
This shows that the \textbf{weights
of an FNO layer should be conditioned on time}, to learn an operator $\left(t,a\left(x\right)\right)\mapsto u\left(t,x\right)$
for an arbitrary time $t$ and initial condition $a\left(x\right)$. 
To this end, we propose a time-aware Fourier layer that updates the solution based on \eqref{eq:time_Green} as follows:
\begin{definition} \label{def:CTFNO}
\textbf{(Continuous-time Fourier layers)}
For $t\in[0,\infty)$, 
a convolution kernel $\kappa_\ell\left(t\right):\bR^n\rightarrow\bR^{d_v\times d_v}$, $R_{\ell}\left(t\right)=\mathcal{F}\left(\kappa_{\ell}\left(t\right)\right)$, a linear transform $W_\ell\left(t\right):\mathbb{R}^{d_v}\rightarrow\mathbb{R}^{d_v}$,  $\varphi_\ell\left(t\right):\bR^n\rightarrow\bC$, and $\psi_\ell\left(t\right)\in\mathbb{R}^{d_v\times d_v}$
the $\ell$-th continuous-time Fourier layer is defined as follows: $\forall v:\Omega\rightarrow\bR^{d_v},\ x\in\Omega$,
\begin{equation}  \label{eq:CTFNO}
    \begin{aligned}
   \mathcal{L}_{\ell}\left(v\right)\left(t,x\right)
    & =\sigma\left(W_\ell\left(t\right)v\left(x\right)+k_{\ell}\left(t\right)\ast v\left(x\right)\right)\\
     & =\sigma\left(W_\ell\left(t\right)v\left(x\right)+\mathcal{F}^{-1}\left(R_{\ell}\left(t\right)\cdot\left(\mathcal{F}v\right)\right)\left(x\right)\right) \\
     & = \sigma\left(W_\ell\psi_{\ell}(t) v\left(x\right)+\mathcal{F}^{-1}\left(\varphi_\ell\left(t\right)R_{\ell}\cdot\left(\mathcal{F}v\right)\right)\left(x\right)\right).
    \end{aligned}
\end{equation}
\end{definition}

\paragraph{Time Modulation}
To equip the FNO network with the ability to capture information on the time of observations, a time-dependent layer \eqref{eq:CTFNO} is constructed as the following temporal modulation: For each time $t$ and frequency $\xi$,
\begin{equation} \label{eq:t_modulation}
\begin{cases}
W_\ell\left(t\right) & = W_\ell\psi_{\ell}(t),\\
R_\ell\left(t,\xi\right) & =\varphi_\ell\left(t,\xi\right)R_\ell\left(\xi\right),
\end{cases}
\end{equation}
where we use notational shortcuts $\varphi_\ell\left(t,\xi\right)=\varphi_\ell\left(t\right)\left(\xi\right)$ and similar for $R(t,\xi)$. We design $\varphi_\ell$ and $\psi_\ell$ as follows:
\begin{enumerate}[topsep=5pt]
\item Two sharing networks, parameterized by two-layer fully connected networks together with sinusoidal embedding \cite{vaswani2017attention}, first convert the input time $t$ into multi-dimensional representations $\varphi\left(t\right),\ \psi\left(t\right)\in\bR^c$ for a hidden dimension $c$.

\item For each Fourier layer $\mathcal{L}_{\ell}$ and frequency $\xi$,
learnable $A_\ell:\bR^n\rightarrow\bC^c$ and  $B_\ell\in\bR^{d_v\times c}$ produce time information $\varphi_\ell\left(t,\xi\right)=\varphi\left(t\right)^TA_\ell\left(\xi\right)\in\bC$ and $\psi_\ell\left(t\right)=\text{diag}\left(B_\ell\psi\left(t\right)\right)\in\bR^{d_v\times d_v}$ (the diagonal matrix with the elements of vector $B_\ell\psi\left(t\right)$ on the main diagonal).
\end{enumerate}

See \cref{fig:CTFNO} for a schematic diagram.

\subsection{Universal Approximation} \label{sec:UnivApprox}
In this section, we prove the universality of the proposed CTFNO, which is condensed in the following informal statement. A formal statement and details of the proof are provided in \cref{appen:universal}.
\begin{theorem} \label{thm:universal}
\emph{\textrm{\textbf{(Informal)}}} CTFNO can approximate any time-dependent continuous operator, of arbitrary accuracy.
\end{theorem}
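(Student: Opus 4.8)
The plan is to follow the three-step template that underlies the universality proof of the time-independent FNO, augmented by a reduction that folds the temporal variable into the network's input. Fix a continuous operator $\cG\colon[0,T]\times\cK\to\cU(\Omega;\bR^{d_u})$, with $\cK$ a compact subset of the input function space, and a tolerance $\eps>0$; joint continuity on the compact product domain yields uniform continuity and a compact image. \emph{Step 1 (finite-dimensional reduction).} Replace $a$ by its projection $P_{\le N}a$ onto the trigonometric modes $|\xi|\le N$ and post-compose $\cG$ with the projection $P_{\le M}$ of its output onto $|\xi|\le M$. Because the trigonometric system is a Schauder basis of the periodic Sobolev space, uniform continuity of $\cG$ together with compactness of domain and image makes both truncation errors $<\eps/3$, uniformly in $(t,a)\in[0,T]\times\cK$, once $N,M$ are large. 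What remains is to approximate, to accuracy $\eps/3$, a continuous map $g\colon[0,T]\times K\to\bR^{d_M}$ on a compact $K\subset\bR^{d_N}$ that records the retained input modes.

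\emph{Step 2 (manufacturing a time channel).} First observe that \eqref{eq:CTFNO} collapses to an ordinary Fourier layer when its modulations are frozen: choosing $\varphi(t)$ and $\psi(t)$ to carry a constant component (a zero-frequency entry of the sinusoidal embedding already provides one) and choosing $A_\ell,B_\ell$ so that $\varphi_\ell(t,\xi)=\varphi(t)^T A_\ell(\xi)\equiv1$ and $\psi_\ell(t)=\mathrm{diag}(B_\ell\psi(t))\equiv I$ recovers Definition \ref{def:FNO}. I would use the first continuous-time layer differently: acting on a constant channel present in the lifted input (available under the standard FNO convention of appending coordinate/constant channels, or from the DC component of a coordinate channel), the diagonal modulation $\psi_1(t)$ and the matrix $W_1$ can be arranged so that one channel of $v_1$ carries a continuous injective function $\theta(t)$ of $t$, with $\theta=\sigma\circ\rho$ and $\rho$ realized by the two-layer embedding network $\psi$. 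Since this channel is constant in $x$, the remaining layers $2,\dots,L$ --- with their modulations frozen as above --- see it exactly as an extra input channel encoding time. Hence a CTFNO contains, as a sub-family, every time-independent FNO applied to the augmented input $\tilde a(x)=(a(x),\theta(t))$; applying the known universal approximation theorem for that FNO to the continuous operator $\tilde a\mapsto P_{\le M}\cG(t,a)$ produces a CTFNO approximating $g$ to accuracy $\eps/3$, and combining with Step 1 gives the claimed $\eps$-bound, uniformly over $[0,T]\times\cK$. (If a self-contained argument is preferred, Step 1 already reduces the problem to a map on a compact finite-dimensional set, so the classical feed-forward universal approximation theorem --- which is precisely what the FNO universality proof lifts --- applies directly.)

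\emph{Main obstacle.} The subtle point is exactly this interface: time enters the layer only \emph{multiplicatively}, through the diagonal $\psi_\ell(t)$ and the scalar $\varphi_\ell(t,\xi)$, and the Fourier layers carry no additive bias, so one cannot literally concatenate $t$. One must therefore (i) secure a constant/DC channel in the lifted input, (ii) ensure that passing the raw time feature through one copy of $\sigma$ still separates times --- choosing $\rho$ so that $\sigma\circ\rho$ is injective, or carrying both $\sigma(\rho(t))$ and $\sigma(-\rho(t))$ --- and (iii) verify that all later modulations can be frozen to the identity simultaneously with reusing the shared, time-independent matrices $W_\ell,R_\ell$ for the FNO part. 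A fully self-contained alternative, worth including for robustness, is a partition-of-unity construction: run $J$ time-independent FNO sub-networks $F^{(1)},\dots,F^{(J)}$ in parallel on a block-diagonal channel layout with $F^{(j)}\approx g(\tau_j,\cdot)$ on a fine time mesh $\{\tau_j\}_{j=1}^J$, and use the last layer's $\psi_L(t)$ to multiply block $j$ by a coordinate $\lambda_j(t)$ of a continuous partition of unity subordinate to the mesh, so that the time-independent projection $\cQ$ returns $\sum_j\lambda_j(t)F^{(j)}(\cdot)\approx g(t,\cdot)$; the only remaining subtlety is composing this block selection with the last $\sigma$ (run it in an affine regime of $\sigma$, or append one extra read-out layer). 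Either route then ends with the routine bookkeeping of how $N,M$, the width $d_v$, the depth $L$ (and $J$), and the sinusoidal-embedding dimension grow as $\eps\to0$, which the formal statement in \cref{appen:universal} must record.
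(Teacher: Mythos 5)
Your overall skeleton matches the paper's: reduce to a finite-dimensional problem by Fourier projection of input and output (uniformly in $t$), then make time available as an extra coordinate so that known FNO/MLP universality can be invoked. Where you diverge is in how time is injected, and this is where the two concrete issues lie. First, your ``main obstacle'' (no additive bias, hence no literal concatenation of $t$) is an artifact of the main-text Definition~\ref{def:CTFNO}: the formal layer used in \cref{appen:universal} carries a time-modulated bias $\psi_{b,\ell}(t)\,b_\ell(x)$, and the paper's \cref{lem:time_cybenko} uses exactly this term to write $t$ into a hidden channel (first layer outputs $(x,t)$, then classical Cybenko). Your workaround instead multiplies $\psi_1(t)$ against a constant channel that you assume is appended at lifting; but the paper's lifting is the bare linear map $\cP(a)(x)=Pa(x)$, so no such channel is guaranteed (and the DC component of $Pa$ varies with $a$ and may vanish), so as stated your construction does not go through within the declared architecture without adding that assumption --- whereas the bias already present in the formal definition makes it immediate.

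Second, your reduction ``CTFNO $\supseteq$ time-independent FNO on the augmented input $\tilde a=(a,\theta(t))$'' applies the FNO universality theorem downstream of a first layer that has already lifted $a$ and passed it through $\sigma$; for the target operator on that intermediate representation to be well defined and continuous you need the first layer to be injective (recoverability of $(t,a)$ from $v_1$ on the compact set), which you do not argue. It is fixable (injective $\sigma$, an identity block in $W_1$, injective $\theta$, continuity of the inverse on a compact set), but it is a genuine missing step. The paper sidesteps it by never composing a universal FNO after a nonlinear time-writing layer: it approximates $\cF_N^{-1}\circ\widehat{\cG}_N\circ\cF_N\circ P_N$ piecewise via \cref{lem:fno}, handles time only in the finite-dimensional middle map through \cref{lem:time_cybenko}, proves the uniform-in-$t$ projection bound by a partition of $[0,T]$ (\cref{lem:lem1}, which you assert by compactness but do not prove), and chains the errors with an explicit Lipschitz argument for $\widehat{\cN}$. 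Your partition-of-unity alternative (parallel FNO blocks selected by $\psi_L(t)$) is a genuinely different and self-contained route, modulo the read-out-through-$\sigma$ issue you already flag; had you developed that branch fully, it would avoid both objections above.
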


\subsection{Stability} \label{sec:Stab}
When we learn a model for the time evolution of dynamical systems, the learned dynamics should be guaranteed to be well-posed.
It is important because the learned system can be unstable when using a generic neural network \cite{szegedy2013intriguing, moosavi2017universal}.
Such unstable networks are vulnerable to adversarial attacks, overfitting, and unauthorized exploitation, which may render the network useless in practice.
Therefore, stability is a necessary condition in real-world applications.
The stability is measured by the sensitivity of the prediction with respect to small perturbations of the inputs \cite{hadamard1902problemes}.
A formal definition is given as follows.
\begin{definition}
(Stability) A time-dependent PDE is said to be \textit{stable} if for any solution $u\left(t,x\right)$ with initial condition $u_{0}\left(x\right)$ and $\epsilon>0$, there exists $\delta>0$ such that for all new initial function $\tilde{u}_{0}\left(x\right)$ satisfying $\left\Vert \tilde{u}_{0}-u_{0}\right\Vert _{\mathcal{}}<\delta$, the corresponding solution $\tilde{u}\left(t,x\right)$ satisfies $\left\Vert \tilde{u}-u\right\Vert <\epsilon$ for all $t\geq0$.
\end{definition}

The stability is a hard constraint imposed upon the model.
While some studies have addressed the stability of network architectures, it has typically been used as a soft constraint by adding an extra regularization loss
\cite{moosavi2019robustness}, or required computation of the eigenvalues of the Jacobian matrix \cite{ross2018improving, hoffman2019robust}.
Besides, the stability of Neural ODEs is elicited by stable discretization techniques of the ODEs \cite{haber2017stable,yan2019robustness}.
The stability of CTFNO is connected to the well-posedness of the learned solution operator.
The kernel formulation (\ref{eq:time_Green}) sheds light on a way to ensure stability. Proof is deferred to Appendix \ref{appen:stable}.

\begin{proposition} \label{prop:stab}
\label{prop:stability_FNO}\textbf{(Stability of CTFNO)} If $\left\Vert R\left(t,\xi\right)\right\Vert _{2}$ and $\left\Vert W\left(t\right)\right\Vert _{2}$ are bounded for every $t>0$, then the corresponding CTFNO with a Lipschitz continuous activation function is stable.
\end{proposition}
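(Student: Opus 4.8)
The plan is to prove something slightly stronger than asked: that the entire CTFNO map $a\mapsto u(t,\cdot)$ is globally Lipschitz in $a$ with a Lipschitz constant that is \emph{independent of $t$}. The stability notion above then follows at once, and — this is the conceptual point — it holds uniformly for all $t\ge 0$ rather than degrading as $t$ grows. Throughout, function norms are taken in $L^2(\Omega)$, matrix norms are spectral norms $\left\|\cdot\right\|_2$, and $L_\sigma$ denotes the Lipschitz constant of $\sigma$.

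First I would estimate the propagation of a perturbation through a single continuous-time Fourier layer. Fix $t>0$ and inputs $v,\tilde v\colon\Omega\to\bR^{d_v}$, and set $w=v-\tilde v$. Since $\sigma$ acts componentwise and is $L_\sigma$-Lipschitz, integrating the pointwise bound gives
\[
\left\|\cL_\ell(v)(t,\cdot)-\cL_\ell(\tilde v)(t,\cdot)\right\|_{L^2}
\le L_\sigma\left\|W_\ell(t)w+\cF^{-1}\!\left(R_\ell(t)\cdot\cF w\right)\right\|_{L^2}.
\]
The linear term is bounded by $\left\|W_\ell(t)\right\|_2\left\|w\right\|_{L^2}$, and for the convolution term Parseval's identity — together with the fact that modal truncation is a norm-nonincreasing projection — yields $\left\|\cF^{-1}(R_\ell(t)\cdot\cF w)\right\|_{L^2}\le\big(\sup_\xi\left\|R_\ell(t,\xi)\right\|_2\big)\left\|w\right\|_{L^2}$. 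Invoking the hypothesis, namely uniform bounds $\left\|W_\ell(t)\right\|_2\le C_W$ and $\left\|R_\ell(t,\xi)\right\|_2\le C_R$ over all $t$ and $\xi$, each layer is Lipschitz in its argument with constant $\mathrm{Lip}_\ell:=L_\sigma(C_W+C_R)$, uniformly in $t$.

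It remains to compose. The lifting $\cP$ and projection $\cQ$ are the fixed linear maps $v\mapsto Pv$ and $v\mapsto Qv$, hence Lipschitz with constants $\left\|P\right\|_2$ and $\left\|Q\right\|_2$; telescoping the per-layer estimate along $a\xrightarrow{\cP}v_0\xrightarrow{\cL_1}\cdots\xrightarrow{\cL_L}v_L\xrightarrow{\cQ}u$ gives
\[
\left\|u(t,\cdot)-\tilde u(t,\cdot)\right\|_{L^2}\le\underbrace{\left\|Q\right\|_2\Big(\prod_{\ell=1}^{L}\mathrm{Lip}_\ell\Big)\left\|P\right\|_2}_{=:K}\,\left\|a-\tilde a\right\|_{L^2},
\]
with $K$ independent of $t$; then choosing $\delta=\epsilon/K$ (any positive $\delta$ if $K=0$) closes the argument. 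I expect the only genuine subtlety to be the Fourier bookkeeping in the middle step: one must verify that boundedness of the multiplier $R_\ell(t,\xi)$ over \emph{all} frequencies and times is precisely what makes the Parseval estimate uniform, and that the frequency truncation used in the fast Fourier transform only helps, being an orthogonal projection. Everything else — componentwise Lipschitzness of $\sigma$, submultiplicativity of $\left\|\cdot\right\|_2$, and the telescoping composition — is routine, and the uniform-in-$t$ conclusion arises simply because evaluating at a different time only swaps in a different, but uniformly bounded, set of layer weights.
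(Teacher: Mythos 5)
Your proof is correct and follows essentially the same route as the paper's: a per-layer Lipschitz estimate obtained from the Lipschitz activation, the spectral-norm bound on $W_\ell(t)$, and Plancherel's theorem applied to the Fourier multiplier $R_\ell(t,\xi)$, all uniform in $t$ by the boundedness hypothesis. If anything, you are slightly more complete than the paper, which only writes out the single-layer estimate (in squared norms) and leaves the composition through $\cP$, the $L$ layers, and $\cQ$ implicit, whereas you make the telescoping and the time-uniform choice of $\delta=\epsilon/K$ explicit.
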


\paragraph{Gershgorin discs normalization}
As given in proposition \ref{prop:stability_FNO}, the global stability of CTFNO is guaranteed if the Fourier kernel and weight have bounded $L^2$ norms. Because of the expensive computational cost of $L^2$ norm, however, we suggest a practical method for enforcing stability conditions.
Gershgorin's circle theorem \cite{varga2010gervsgorin} allows us to make fast deductions on the bound of eigenvalues. It states that every eigenvalue $\lambda$ of a square matrix $A=\left(a_{ij}\right)$ satisfies $\left|\lambda-a_{ii}\right|\leq\Sigma_{j\neq i}\left|a_{ij}\right|$ for each $i$. 
Therefore, as we regulating the $L^1$ norm of each row $\mathbf{r}_i$, we can impose the requisite stability.
In implementation, we normalize $\parallel \mathbf{r}_i\parallel_{L^1}\leq M$ for each $i$ with a pre-defined $M>0$.

\section{Experiments}
\subsection{Experiments for learning time-dependent PDEs}\label{sec:pde}
In this section, we empirically validate the performance of the proposed model as a continuous-time neural PDE surrogate. Given an initial function $u_0$, we train models to learn $\left(t,u_{0}\left(x\right)\right)\mapsto u\left(t,x\right)$ for $t\in\left(0,T\right]$ with mean squared error (MSE) loss. Throughout all experiments, we run models three times with different random seeds and report the averaged value.

\paragraph{Datasets} We choose four PDEs for numerical experiments. We consider \textbf{heat} \cite{baron1878analytical} and \textbf{Burgers}' equations \cite{bateman1915some}, which are canonical time-dependent linear and nonlinear PDEs, respectively. They take the form 
\begin{equation}
\frac{\partial u}{\partial t} + \alpha u\frac{\partial u}{\partial x}=\nu \frac{\partial^2 u}{\partial x^2}, \ \  x\in \left(0,1\right),\ t\in\left(0,T\right], \label{eq:heat_and_burgers}
\end{equation}
with the corner cases: heat $\alpha=0$ and Burgers' equation $\alpha=1$. Here, $\nu$ is a positive constant.
We also apply our model for two examples provided by \textsc{PDEBench} \cite{takamoto2022pdebench}: \textbf{compressible Navier-Stokes equations} equations, which describes the motion of fluid dynamics, and \textbf{diffusion-sorption} equation. Diffusion-sorption equation is a diffusion process influenced by a retardation factor, which is is a variable stands for the degree to which the diffusion process is hindered by the sorption interactions.
Detailed descriptions are provided in \cref{sec:data}.

\paragraph{Baselines}
We compare the performance of the proposed model with representative PDE surrogates. DeepONet (DON; \citealt{lu2019deeponet}) is an alternative operator learning method that represents the solution operator by a basis expansion. POD-DeepONet (PDN; \citealt{lu2022comprehensive}), a model  based on a proper orthogonal decomposition of function spaces, is also considered. FNO-2D \cite{li2020fourier} is a Fourier neural operator with spatio-temporal inputs and FNO-RNN is an autoregressivly trained FNO.
 
\begin{table}
    \centering
      \setlength\tabcolsep{18.0pt}
    \caption{RMSE ($\times10^{-2}$) results and the number of parameters of each model on PDE problems.} \label{tab:PDEs}
     \scalebox{1.0}{
    \begin{tabular}{cccccc}
    \toprule
    Model & Heat & Burgers & Diffusion-Sorption & Navier-Stokes & \# Params \\
    \midrule 
    FNO-RNN &  9.506 & 73.100 & 5.187 & 36.791 & 2.02M \\
    FNO-2D      & 0.033  & 3.136 & 0.053 & 4.486 & 7.00M \\
    DON        & 0.473  & 6.022 & 0.234 & 4.207 & 1.58M \\
    PDN        & 0.323  & 5.796 & 0.314 & 3.781 & 1.78M \\
    \rowcolor{lightpurple}
    CTFNO         & \textbf{0.026}  & \textbf{1.952} & \textbf{0.042} & \textbf{2.947} & 2.38M \\
    \bottomrule
  \end{tabular}}
    \vspace{-5pt}
\end{table} 

\paragraph{Results}
\begin{wraptable}{r}{0.32\textwidth}
    \centering
    \setlength\tabcolsep{5.0pt}
    \vspace{-10pt}
    \caption{Training and inference time (second/epoch) on heat equation.} \label{tab:Time_elapsed_heat}
    \vspace{5pt}
        \begin{tabular}{ccc}
        \toprule
        Model & Training & Inference \\
        \midrule
        FNO-RNN  & 6.34 & 0.74 \\
        FNO-2D & 2.06 & 0.30 \\
        \rowcolor{lightpurple}
        CTFNO & \textbf{0.79} & \textbf{0.05} \\
        \bottomrule
        \end{tabular}
    \vspace{-5pt}
\end{wraptable}
Results of the test root MSE (RMSE) are reported in \cref{tab:PDEs}.
We can see that CTFNO significantly outperforms all baselines.
Comparing the results of CTFNO with the original FNOs, the core strengths of the proposed model stand out more.
The results show that the autoregressive learning-based FNO-RNN is difficult to capture the dynamics of PDEs accurately. Also, it requires several autoregressive rollouts to predict the solution after a long time, which is rather time-consuming.
On the other hand, CTFNO can predict the solution with a single call. 
Besides, unlike FNO-2D, which can only predict solutions at times on a fixed uniform grid, CTFNO can predict a solution at any desired time, retaining the number of parameters regardless of the length of prediction time.
\cref{tab:PDEs} shows that our model uses five times fewer parameters than FNO-2D.
The results demonstrate that the use of the proposed time-dependent structure significantly improves the capacity of the model to treat time.
Furthermore, we obtain 2–8$\times$ and 6-15$\times$ speed-ups for training and inference time, respectively (See \cref{tab:Time_elapsed_heat}).
Moreover, our model is superior to existing benchmark PDE models.
The results confirm that our model describes the diffusion phenomenon quite well.
Furthermore, CTFNO outperforms other models even in the dissipative nonlinear system with shock formation (Burgers), and complex fluid dynamics (Navier-Stokes).
We also include additional heatmaps of the learned solution compared with the exact solution over the entire time in Appendix \ref{appen:heatmaps_ode_heat_burgers}.
The overall results confirm the superiority of CTFNO over existing PDE surrogates for learning time-dependent PDEs.
\paragraph{Ablation study on where to assign time}
\begin{wraptable}{r}{0.32\textwidth}
    \centering
    \vspace{-20pt}
      \setlength\tabcolsep{9.0pt}
    \caption{RMSE ($\times10^{-2}$) errors of ablation studies.} \label{tab:ablation}
    \vspace{5pt}
     \scalebox{1.0}{
    \begin{tabular}{ccc}
    \toprule
    Model & Heat & Burgers\\
    \midrule 
    Baseline 1  & 1.538 & 5.654\\
    Baseline 2    & 0.370 & 5.305 \\
    Baseline 3    &  0.794 & 4.426 \\
    \rowcolor{lightpurple}
    CTFNO         & \textbf{0.026}  & \textbf{1.952}\\
    \bottomrule
  \end{tabular}}
    \vspace{-55pt}
\end{wraptable} 
The rationale of the way of imposing temporal information on CTFNO is based on Green’s formula of time-dependent PDEs \eqref{eq:time_Green}.
Here, we examine how useful the weight modulating structure of CTFNO is for learning time-dependent PDEs.
Rather than \textbf{weights}, there are three more places in the FNO network where temporal information can be mounted. We study these three alternative ways as follows:
\begin{itemize}[leftmargin=0.21in]
\item \textit{Baseline 1}: concatenate $t$ with an \textbf{input} function $u_0\left(x\right)$.
\item \textit{Baseline 2}: concatenate encoded time $\varphi_0(t)$ with a \textbf{lifted input} function $v_0(x)$.
\end{itemize}

\vspace{-17pt}
\begin{itemize}[leftmargin=0.21in]
\item \textit{Baseline 3}: concatenate encoded times $\varphi_0 (t)$, $\ldots$, $\varphi_L (t)$
with intermediate \textbf{features} $v_0(x), \ldots, v_L(x)$, respectively.
\end{itemize}
See Appendix \ref{appen: network_detail} for more details. 
We test the ability of these models to learn heat and Burgers' equations.
Results in Table \ref{tab:ablation} show that, in both examples, 
the ablation models result in significantly lower performance than CTFNO.
A method of concatenating time into the input allows us a simple way to put time information into the network, but the results confirm that it is not effective at all.
Despite the structure being capable of handling arbitrary times consecutively, the three ablation study models perform similarly or worse than FNO-2D, which can only evaluate solution values over time on a specific grid.
The results validate that the time modulating structure of CTFNO designed based on the time-dependent Green's formula is much more expressive in learning the PDE.

\subsection{Capability to represent diverse dynamics} \label{sec:synthetic}
In this section, we further harness the proposed CTFNO for modeling a variety of time-evolving dynamics, not confining to physical PDE problems. In all experiments, every model was trained with MSE loss.
\vspace{-12pt}
\paragraph{Why do we consider PDEs for time-series modeling?}
Starting with Neural ODEs, leveraging DEs has been found to be effective in modeling time-series data.
They have shown promising results, however, their model architectures and inference schemes are specialized to the specific DEs on which they are based.
These bespoke model structures rule out their generalization ability
to other classes of DEs, which is further exacerbated in real-world applications.
The necessity for a model capable of learning a wide range of dynamics has also been discussed by \citet{holt2022neural}.
However, these existing studies consider the target state only as a function of a single time variable, not a multivariate function of other variables as well as time. This makes the models difficult to understand which variables the dynamics depend on and how they relate to each other. It will be even more limited in real applications that approximate dynamics in latent space where how states evolve and what kinds of differential equations they follow are unknown.
On the other hand, time-dependent PDEs describe the evolution of a physical quantity, not only with time but also according to other variables such as spatial variables.
Due to their heavy expressivity,
PDEs are widely used to describe complex continuous processes \cite{temam2001navier,kulov2014mathematical,joshi2002optimal}.
In what follows, we show that our PDE-based model can better represent a diverse class of dynamics than existing DE-based models. 

\paragraph{Datasets} We use several illustrative examples to demonstrate the outstanding capacity of the proposed method in learning diverse classes of dynamics. A mathematical formulation of these dynamics can be found in Appendix \ref{sec:data}.
\begin{itemize}[topsep=0pt]
\item \textbf{Square and Sawtooth} \cite{bilovs2021neural} generate piecewise differentiable trajectories having cusps. We consider these to evaluate the capability on modeling waveform signals.
\item \textbf{Stiff ODE} \cite{holt2022neural} is a second-order ODE which exhibits regions of high stiffness. This is a typical example that Neural ODEs fail to learn.
\item \textbf{Spiral ODE}  \cite{bilovs2021neural} is a two-dimensional system of nonlinear ODEs, commonly arising in biological systems. The dynamics describe spiral shaped trajectories.
\item \textbf{Reaction ODE} is commonly used to model chemical reactions and is of the form 
$\partial u / \partial t=6u\left(1-u\right)$.
Unlike to aforementioned ODEs, a solution to Reaction ODE is regarded as a function. 
\end{itemize}
\paragraph{Baselines}
We evaluate the performance of CTFNO in comparison with several DE-based continuous time models: the standard Neural ODE (NODE; \citealt{chen2018neural}), ANODE \cite{dupont2019augmented}, and Neural Flow (NF; \citealt{bilovs2021neural}), which directly parametrizes the solution operator of an ODE, are adopted for ODE-based models.
We also consider Neural Laplace (NL; \citealt{holt2022neural}) which can represent diverse classes of equations by modeling them in the Laplace domain.

\begin{figure*}[t]
  \begin{center}
    \includegraphics[width=1.0\textwidth]{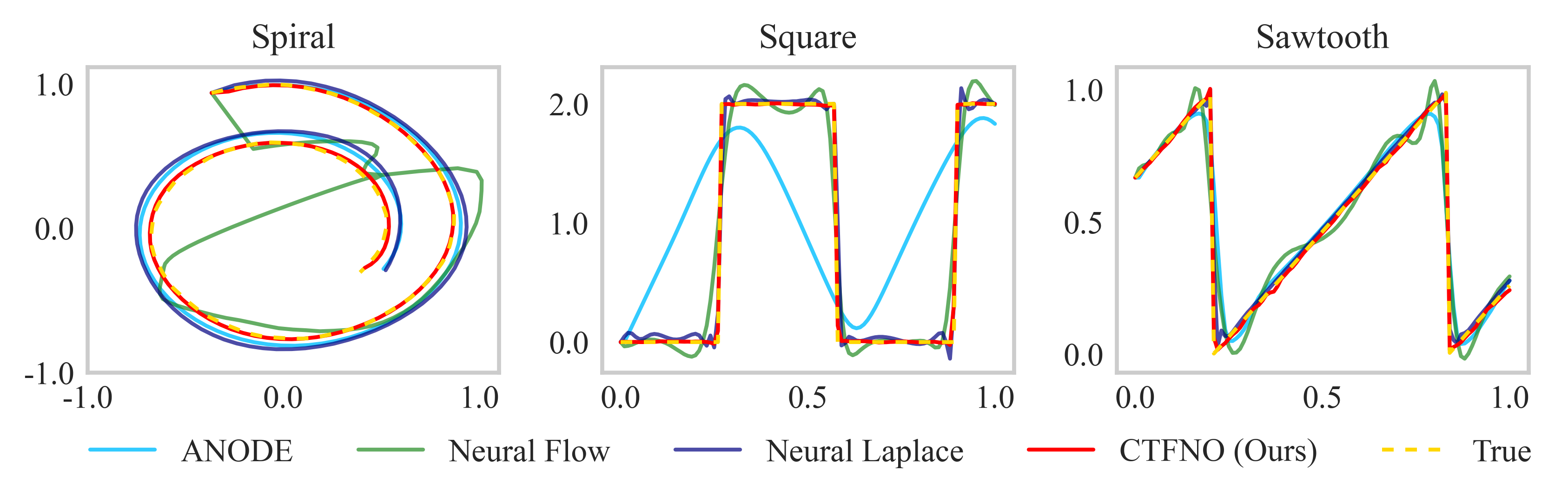}
  \end{center}
  \vspace{-15pt}  \caption{Visualization of learned solutions to synthetic datasets, which shows the superiority of our CTFNO.}
  \vspace{-10pt}
  \label{fig:synthetic}
\end{figure*}

\paragraph{Results}
The overall results of RMSE are reported in \cref{tab:synthetic} and \Cref{fig:synthetic} provides qualitative results of the learned solutions. 
Note that all models have a comparable number of parameters (See \cref{tab:synthetic_nparams}).
We can see that the performance of NODE and ANODE on different datasets varies a lot. They perform well on certain data and fail to learn the dynamics of another one.
Unlike these two, NF, NL, and our model directly learn the solution operator without using the numerical ODE solver.
Neural Flow parameterizes the ODE solution operator in the time domain. On the other hand, NL can describe more diverse dynamics by modeling them in the Laplace domain instead of the time domain. The results, which show the superiority of NL over NF, confirm that the range of dynamics that the model can describe is crucial.
For one-dimensional problems, CTFNO is similar but slightly better than NL. Qualitative comparisons in \cref{fig:synthetic} show that CTFNO approximates the discontinuity more accurately than NL without spurious oscillations. The advantage of CTFNO is evident in the spiral and reaction ODEs. 
Spiral is a system of ODEs, and reaction ODE describes the time-dependent evolution of a function defined in a spatial domain.
NL can cover a wide range of dynamics, however, it only considers univariate DEs that depend on only a single time variable. The same goes for other models.
On the other hand, our model can represent time-varying dynamics by considering other correlations. In \cref{tab:synthetic}, RMSE of CTFNO on reaction ODE is more than ten times better than other models and more than two times better for the spiral.
It confirms that the structure of CTFNO is helpful in approximating systems or high-dimensional dynamics.
In addition, the results in \cref{tab:NODE for PDE} show that PDE surrogates learn PDE solution operators much better than NF and NL (both cannot represent spatial relations). 
Moreover, results that show the superiority of CTFNO in extrapolating the spiral trajectories are provided in Figure \ref{fig:spiral_extrap}.
The overall results demonstrate how important the range of expressible dynamics of the model is, validating the suitability of CTFNO for learning a wide array of dynamics. 

\begin{table}[t]
    \centering
      \setlength\tabcolsep{20.5pt}
    \caption{RMSE ($\times10^{-2}$) results on synthetic data.} \label{tab:synthetic}
    \vspace{5pt}
     \scalebox{1.0}{
    \begin{tabular}{cccccc}
    \toprule
    Model & Square & Sawtooth & Stiff & Spiral & Reaction  \\
    \midrule
    NODE      & 97.50  & 28.09 &  44.23    & 3.26  & 5.109  \\
    ANODE       & 80.95  & 9.92 & 37.18    & 3.26    & 4.241   \\
    Neural Flow    &  20.40  & 7.93  &26.39 & 3.26    & 10.300  \\
    Neural Laplace & 17.06  & 4.84  & 20.83 & 4.25    & 2.804  \\
    \rowcolor{lightpurple}
    CTFNO         & \textbf{11.69}  & \textbf{3.90}  &  \textbf{16.53}  & \textbf{1.87}    & \textbf{0.239} \\
    \bottomrule
  \end{tabular}}
    \vspace{-5pt}
\end{table} 

\subsection{Real Time-Series Applications} \label{sec:real}
This section is devoted to investigating the performance of our model on interpolation and prediction tasks on real-world time-series datasets, including partially observed, multi-variate sequences.
\paragraph{Time-Series Modeling in Latent Space}
To represent sporadically observed real time-series data, we follow the encoder-decoder framework of Latent ODEs \cite{rubanova2019latent} that leverage a VAE \cite{kingma2013auto, szegedy2013intriguing} architecture to represent incomplete time-series data as a continuous-time model.
To focus on the model representation of inherent dynamics in the latent space, we employ a simple RNN encoder.
By passing a given input time-series through the RNN encoder, a latent vector $z_0$ is sampled using the last feature vector as the mean $\mu$ and standard deviation $\sigma$, which is the same as VAE.
Now, assuming implicit dynamics in the latent space with $z_0$ as the initial point, CTFNO returns an output vector at the desired time steps.
Finally, after passing through a decoder with a fully connected layer, the difference between the output and the target becomes the loss function for training.
Precise implementation details can be found in Appendix \ref{appen:test_detail}.

\paragraph{Datasets}
We evaluate our model on three real time-series data. For more details, see Appendix \ref{sec:data}.
\begin{itemize}[topsep=0pt]
\item \textbf{MuJoCo} \cite{tassa2018deepmind}
hopper environment from the Deepmind Control Suite records
14-dimensional attributes, including state and action, with 100 timestamps.
To deal with partial observations, we conduct interpolation and prediction tasks, in which we reveal either 10\%, 20\%, 30\%, or 50\% of the ground truth.
\vspace{3pt}
\item \textbf{PhysioNet 2012} \cite{silva2012predicting}
is an irregularly sampled real-world clinical dataset, which is investigated to evaluate our model on sparsely observed time-series.
The goal is to interpolate and predict 41 biomedical features, such as heart rate and glucose, of intensive care unit (ICU) patients. 
\vspace{3pt}
\item \textbf{Human Activity} \cite{kaluvza2010agent}
consists of sporadically observed sensor data collected from five individuals performing several activities (i.e. walking, standing, etc).
We use pre-processing steps as they were provided by \citet{rubanova2019latent}, resulting in 6554 sequences of 211 time points.
We train the models to classify the type of human activities from sequential data.
\end{itemize}
\paragraph{Baselines} DE-based models applied to real time-series data are chosen for the comparisons:
RNN-VAE is a variational autoencoder (VAE; \citealt{kingma2013auto, rezende2014stochastic}) model whose encoder and decoder are recurrent neural networks (RNNs).
ODE-RNN \cite{rubanova2019latent} is a RNN model which uses Neural ODEs to model hidden state dynamics.
Two Latent ODE (LODE) models with RNN \cite{chen2018neural} and ODE-RNN \cite{rubanova2019latent} encoders are also considered.
Finally, Coupling Flow in Neural Flow \cite{bilovs2021neural}, an ODE solution operator which directly models the solution curves of an ODE, is compared.
\begin{table}[t]
  \caption{Interpolation and prediction MSE ($\times 10^{-3}$) on the MuJoCo dataset.}
  \vspace{5pt}
  \label{tab:mujoco}
  \centering
  \setlength\tabcolsep{15.8pt}
  \scalebox{0.90}{
  \begin{tabular}{ccccccccc}
    \toprule
    Model & \multicolumn{4}{c}{Interpolation (\% Observed Points)} & \multicolumn{4}{c}{Prediction (\% Observed Points)} \\
    \cmidrule(lr){2-5} \cmidrule(lr){6-9}
     &\multicolumn{1}{c}{$10\%$} &\multicolumn{1}{c}{$20\%$} & \multicolumn{1}{c}{$30\%$} & \multicolumn{1}{c}{$50\%$} &\multicolumn{1}{c}{$10\%$} &\multicolumn{1}{c}{$20\%$} & \multicolumn{1}{c}{$30\%$} & \multicolumn{1}{c}{$50\%$}\\
    \midrule
    RNN-VAE & 65.14 & 64.08 & 63.05 & 61.00& 23.78 & 21.35 & 20.21 & 17.82 \\
    ODE-RNN & 16.47 & 12.09 & 9.86 & 6.65 & 135.08 & 319.5 & 154.65 & 264.63 \\
    LODE \scriptsize{(RNN enc)} & 24.77 & 5.78 & 27.68 & 4.47 & 16.63 & 16.53 & 14.85 & 13.77 \\
    LODE \scriptsize{(ODE enc)} & 3.60 & 2.95 & 3.00 & 2.85 & 14.41 & 14.00 & 11.75 & 12.58 \\
    Neural Flow &  7.15 & 5.58 & 4.96 & 4.60 & 17.99 & 16.10 & 15.48 & 15.29\\
    \rowcolor{lightpurple}
    CTFNO & \textbf{1.53} & \textbf{1.18} & \textbf{1.12} & \textbf{1.15} & \textbf{9.26} & \textbf{8.93} & \textbf{8.42} & \textbf{8.70} \\
    \bottomrule
  \end{tabular}}
\end{table}

\paragraph{Results} 
\begin{wraptable}{r}{0.47\textwidth}
    \centering
    \vspace{-20pt}
    \setlength\tabcolsep{6.3pt}
    \caption{MSE ($\times 10^{-3}$) on PhysioNet and per-time-point classification accuracies ($\%$) on Human Activity.} \label{tab:physio_activity}
     \vspace{4pt}
     \scalebox{0.95}{
  \begin{tabular}{cccc}
    \toprule
    Model & \multicolumn{2}{c}{PhysioNet} & \multicolumn{1}{c}{Activity} \\
    \cmidrule(lr){2-3} \cmidrule(lr){4-4}
     &Interpolation & Prediction & Accuracy\\
    \midrule
    RNN-VAE & 5.93 & 3.05 & 34.3  \\
    ODE-RNN & 2.36 & - & 82.9 \\
    LODE \scriptsize{(RNN enc)} & 3.16 & 5.78 & 83.5 \\
    LODE \scriptsize{(ODE enc)} & 2.23& 2.95 & 84.6 \\
    Neural Flow & 2.94 & - & 65.7\\
    \rowcolor{lightpurple}
    CTFNO & \textbf{1.80} & \textbf{2.10} & \textbf{85.0}  \\
    \bottomrule
    \end{tabular}}
    \vspace{-0pt}
    \end{wraptable}
In all experiments, we build models with comparable network sizes to make a fair comparison. 
Results on MuJoCo reported in \cref{tab:mujoco} show that CTFNO consistently outperforms the baseline models to a large extent on all tasks of both interpolation and prediction across all kinds of observed time points.
Another observation is that the performance difference between the cases with many and few observation points is the smallest.
These imply that our model approximates the time evolution of latent variables well, even with a small number of observations.
The results in \cref{tab:mujoco} suggest that CTFNO can be leveraged as a relevant model for real applications on time-series with missing time steps.
\cref{tab:physio_activity} summarizes interpolative and predictive performance on PhysioNet and per-time-point classification accuracy on Activity.
CTFNO is superior to all of the benchmark models on both PhysioNet and Activity, which indicate that our model provides a useful utilization of sparsely observed time-series data and a meaningful representation for classification.
Moreover, the numerical integration used in ODE-based models is computationally expensive and sometimes shows numerical instability (See \cref{tab:Time_elapsed}).
On the other hand, Neural Flow and our method efficiently predict the latent trajectories without the need for costly numerical schemes.
The overall results demonstrate that our proposed model consistently improves the performance of baseline models in real applications and is a novel approach that can handle a wide range of real-world problems.

\subsection{Stability and Generalization} \label{sec:exp_stab}
\paragraph{Data Efficiency}
\begin{wrapfigure}{r}{0.4\textwidth}
\begin{center}
  \vspace{-20pt}  
  \includegraphics[width=0.4\textwidth]{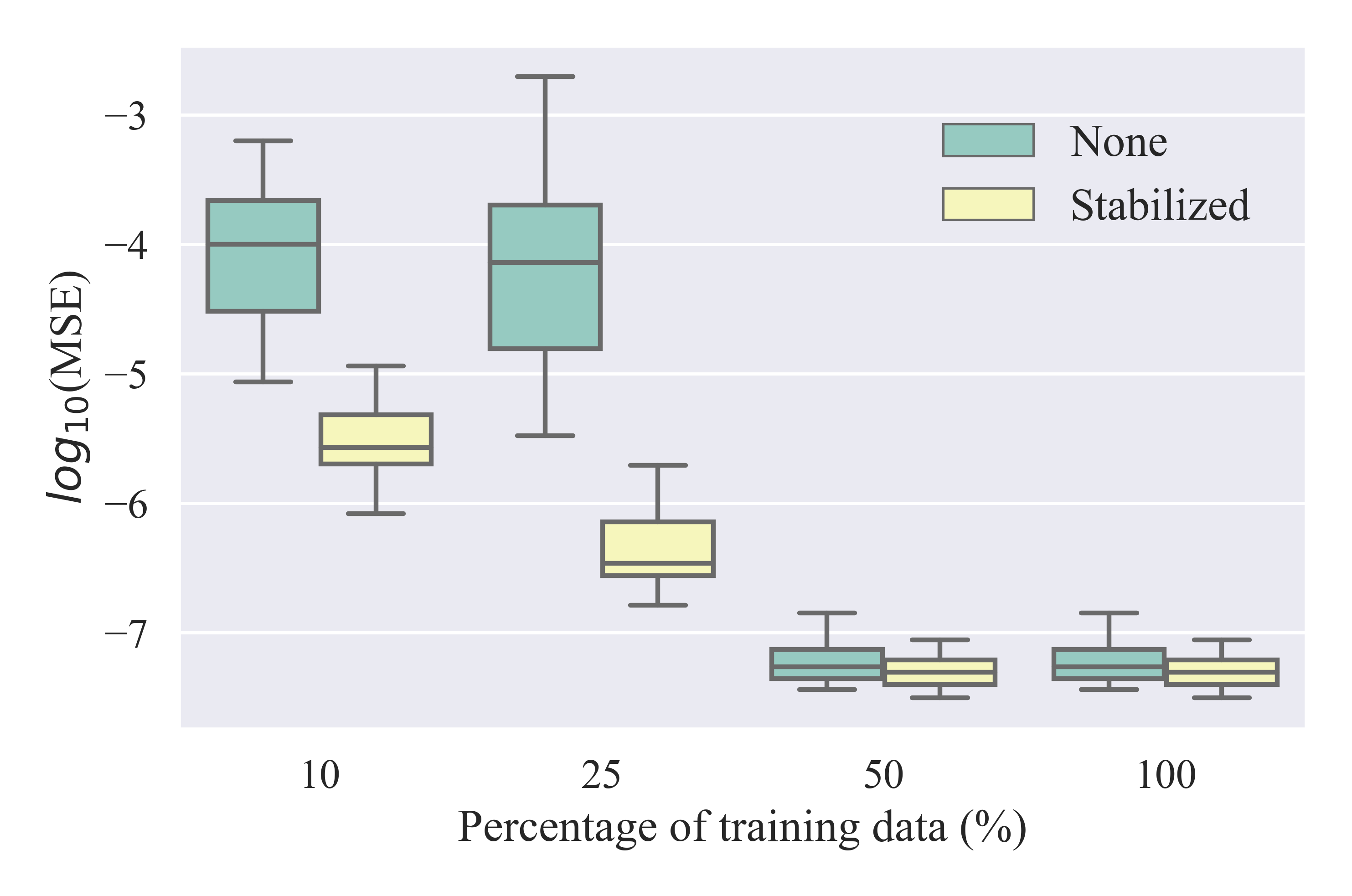}
  \end{center}
  \vspace{-12pt}
  \caption{Evaluation loss of CTFNO on heat equation. Stabilization improves the data efficiency.}
  \label{fig:data_efficiency}
   \vspace{-5pt}
\end{wrapfigure}
 
One of the main drawbacks of neural operators is that they require a wealth of available data.
A large corpus of input-output pairs of numerical PDE solutions is costly to generate. 
Even if the data is obtained from observations of the physical phenomena, they could be scarce.
Therefore, successfully training neural operators with small data is very important to make them useful in real applications.
In \cref{fig:data_efficiency}, we investigate that the stabilization scheme proposed in \cref{sec:Stab} improves the data efficiency capability. Boxplots report test MSEs of CTFNOs with and without stabilization per varying percentages of training data.
We can see that the error of the non-stabilized CTFNO increases a lot when the training data is of limited quantity.
Meanwhile, the stabilized CTFNO retains a consistently smaller test error with lower variance in all scenarios. 
These show that the stabilization scheme allows the model to make better use of the data, resulting in efficient learning without a lot of expensive data. 
Moreover, since the training cost is proportional to the size of the training dataset, stabilization provides an effective way to reduce the training cost.
The stabilization that can achieve these benefits brings our PDE surrogates one step closer to practical applications.

\paragraph{Model Generalization}
 \begin{wrapfigure}{r}{0.40\textwidth}
    \vspace{2pt}
    \centering
    \includegraphics[width=0.38\textwidth]{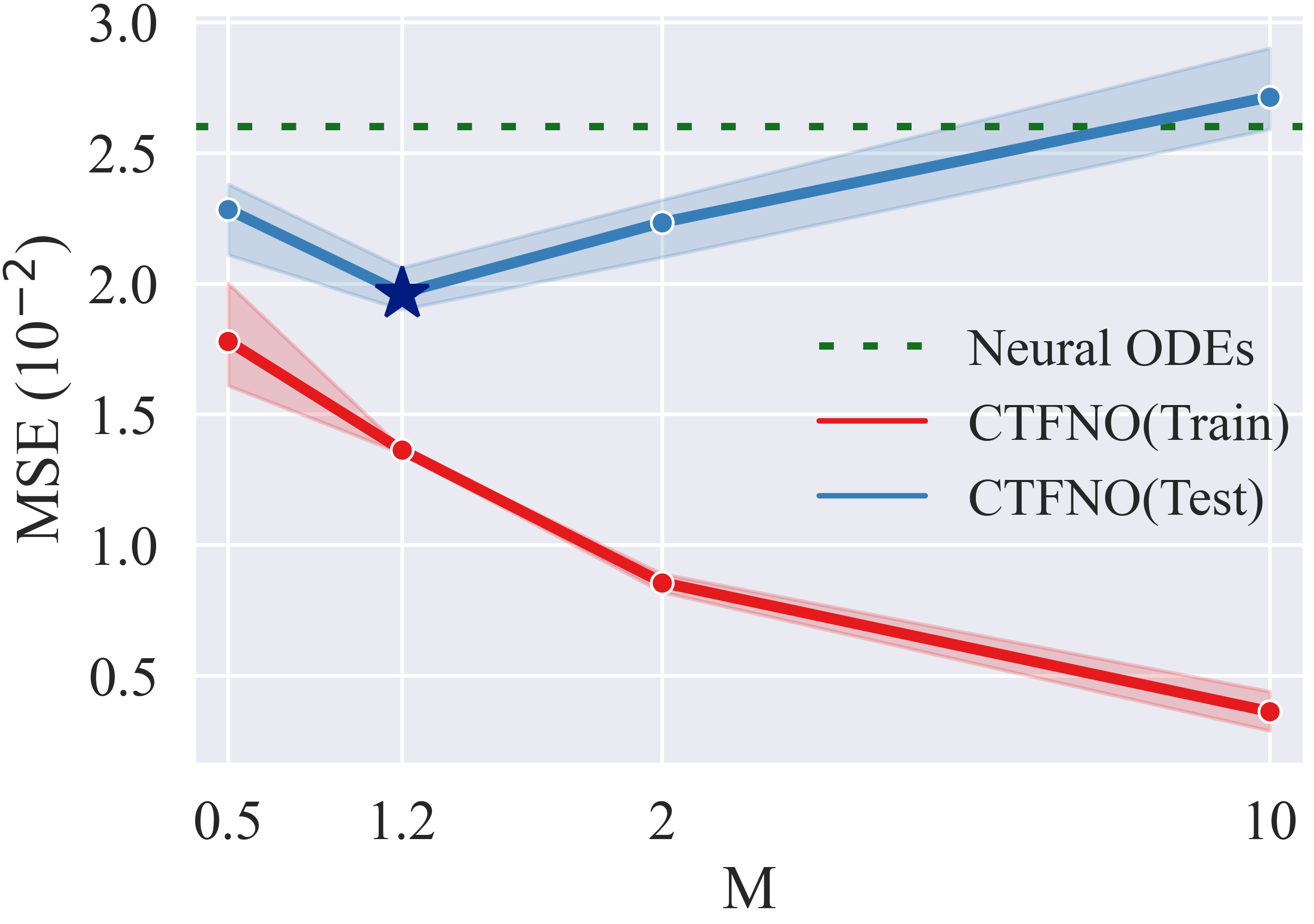}
   \vspace{-2pt}
      \caption{Effect of stabilization on the model generalization, tested on the Plane Vibration dataset.} \label{fig:pv}
  \vspace{-5pt}
  \end{wrapfigure}
Plane Vibration (PV; \citealt{noel2017f}) is a multi-variate data, consisting of five features; force, voltage, and accelerations measured in three spots.
We comply with the experimental setup as provided in HBNODEs \cite{xia2021heavy}, whose
task is to forecast the next eight time steps from the previous 64 consecutive time observations.
PV dataset is considered to elucidate how the stabilization scheme works in practical time-series data. 
MSE ($\times10^{-2}$) of second-order Neural ODEs, containing SONODE \cite{norcliffe2020second} and HBNODE \cite{xia2021heavy}, reported in \cite{xia2021heavy} is $\geq$ 2.5, and it is annotated by a green dashed line in \cref{fig:pv}.
Our CTFNO achieves much lower MSE of 1.96.
\Cref{fig:pv} also presents a positive effect of stabilization; a tendency of CTFNO to be less prone to overfitting.
We investigate the variability of the performance of CTFNO with respect to the Gershgorin's disc stabilization parameter $M$.
The ability to generalize well outside the training dataset is essential for models to be practically useful. We observe that increasing the $M$ leads to overfitting, and too small $M$ produces a degenerated system, dropping the performance.
Besides, the model with congenial stabilization does not suffer from overfitting or degradation of accuracy, which validates the effect of stabilization. Moreover, it is notable that Neural ODEs require tremendous computational burdens due to their use of numerical ODE solvers.
By obviating the need for the ODE solver, the computational time of CTFNO is merely about 10\% of Neural ODEs.

\begin{remark}
Our stabilization controls the amplification of output changes in response to input perturbations.
In addition to the results discussed here, there are more effects of the stabilization; it prevents overfitting and enhances the robustness against noisy observations and adversarial attacks.
We refer to \cref{appen:classification,appen:heatmaps_ode_heat_burgers} for further investigation of the effects of stabilization over model generalization.
\end{remark}

\section{Related Work}
\paragraph{Neural PDE Surrogates}
Pioneering works \cite{raissi2019physics, sirignano2018dgm} have incorporated physical principles into neural networks directly into loss functional. But, they have a challenging optimization landscape \cite{wang2021understanding, krishnapriyan2021characterizing} and require training a new network for a new PDE instance.
An orthogonal class of methods is autoregressive approaches \cite{brandstetter2022message, horie2022physics}, which solve PDEs iteratively. They have been well-suited to irregular boundaries and integrated with existing numerical PDE schemes and benefit from that. But, repetitively applying rollouts, even using numerical ODE solvers \cite{bar2019learning, lienen2022learning}, requires high computational costs and often makes the model hard to train.
Recently, a new line of work on operator learning  \cite{kovachki2021neural} has learnt a mapping from initial/boundary conditions to solutions. Some \cite{lu2019deeponet} learns a basis expansion of operators, others \cite{li2020neural, gin2021deepgreen, you2022nonlocal, salvi2021neural} use a neural network as the ansatz of the solution integral operator.
In this work, we focus on Fourier neural operator (FNO; \citealt{li2020fourier}), which has delivered success in learning various PDEs and vision representation \cite{guibas2021adaptive}. 
\vspace{-5pt}
\paragraph{Dynamics-based Time-Series Models}
The interpretation of a residual network \cite{he2016deep} as a discretization of an ODE \cite{chen2018neural, haber2017stable, lu2018beyond} provided an interface between deep learning and ODEs. 
Subsequently, extensive work has been conducted on parametrizing the continuous dynamics of hidden states using an ODE \cite{greydanus2019hamiltonian, lu2019understanding, liu2021second}.
Owing to the continuous representation of neural networks, Neural ODEs are particularly attractive for irregularly-sampled time-series data \cite{rubanova2019latent, de2019gru, chang2019antisymmetricrnn, kidger2020neural, chen2019symplectic}.
A recent work \cite{bilovs2021neural} circumvents the usage of an expensive numerical integration by directly parametrize the solution trajectory of an ODE.
However, they employ over simplistic ODEs, leading to constraints on the transformation of data, which limit expressivity of the models. To tackle this limitation, there have been several parallel attempts to introduce more diverse differential equation, including controlled DEs \cite{kidger2020neural}, delay DEs \cite{zhu2021neural, anumasa2021delay}, integro DEs \cite{zappala2022neural}, and Laplace transform-based method \cite{holt2022neural}. 
Also, some studies have integrated PDEs into the design of neural networks \cite{eliasof2021pde, ruthotto2020deep, ben2021quantized, sun2020neupde, kim2020pde}. But, all of these works assign author-defined specific PDEs to neural networks and all these are not applied to real time-series applications.

\section{Conclusion \& Limitations} \label{sec:conclusion}
In this paper, we presented a novel approach for modeling time-series in terms of PDEs.
As time is intrinsically continuous, we proposed a neural operator CTFNO that learns the underlying PDE by equipping FNO with the ability to represent time in a continuous manner.
We also provided theoretical guarantees for the universal approximation and the stability of CTFNO.
Our comprehensive experiments demonstrated that CTFNO outperforms existing differential equation-based models on synthetic and real-world datasets, and the proposed stabilization method effectively improves model generalization and robustness.

We note that FNO is hard to embody discontinuous features because Fourier transform only captures global information.
We expect that extending our approach to a model that can extract local features, such as \cite{gupta2021multiwavelet}, provides interesting avenues for future work.
Although stabilization increases the generalization of the model, the optimal $M$ for the given data is unknown, and strongly granted stabilization for some problems, such as ill-posed PDEs, may cause performance degradation.

\nocite{langley00}

\bibliography{mybib}
\bibliographystyle{./Template/Format}

\newpage
\onecolumn
\appendix
\appendix

\section{Universal Approximation} \label{appen:universal}
\subsection{Mathematical Notation} \label{appen:notation}
We introduce the symbols and mathematical notations that are frequently used in this paper.
\begin{center}

\begin{tabular}{l  p{0.7\textwidth}  r}
\textbf{Symbol} & \textbf{Description} 
\\
\midrule
$\sigma$ & activation function \\
$d$ & spatial dimension of domain \\
$\T^d$ & periodic torus, identified with $[0,2\pi]^d$ \\
$d_a$ & dimension of the input $a\left(x\right)$ \\
$d_u$ & dimension of the output solution $u\left(x\right)$ \\
$d_v$ & dimension of the augmented representation $v\left(x\right)$ \\
$T$ & terminal time \\
$L$ & the number of layers \\
$\Omega\subset\bR^d$ & spatial domain \\
$x$ & point in the the spatial domain \\
$\xi$ & frequency variable in the Fourier domain \\
$\cA(\Omega;\R^{d_a})$ & input function space consists of $a:\Omega\rightarrow\bR^{d_a}$ \\
$\cU(\Omega;\R^{d_u})$ & output function space consists of $u:\Omega\rightarrow\bR^{d_u}$\\
$\cF_X$, $\cF_{X}^{-1}$ & Fourier transform and inverse Fourier transform with respect to $X$ \\
$\cF_N$, $\cF_N^{-1}$ & discrete Fourier transform and inverse \\
$\cK_N$ & set of discrete Fourier wavenumbers $\cK_N = \set{\xi\in \Z^d}{|\xi|_\infty \le N}$ \\
$\cP$ & lifting operator \\
$\cL_\ell$ & neural operator layer \\
$\cQ$ & projection operator \\
$L^2$ & space of square-integrable functions \\
$L^2_N$ & $L^2_N\subset L^2$ trigonometric polynomials of degree $\le N$ \\
$H^s$ & Sobolev space of smoothness $s$, with norm $\Vert \slot \Vert_{H^s}$ \\
$P_N$ & $L^2$-orthogonal Fourier projection $P_N: L^2 \to L^2_N$ \\
$C\left(\left[0,T\right]\times X\right)$ & space of continuous functions $u:\left[0,T\right]\rightarrow X$ satisfying $\left\Vert u\right\Vert_{C\left(\left[0,T\right]\times X\right)}=\underset{0\leq t\leq T}{\max}\left\Vert u\left(t\right)\right\Vert < \infty$ \\
$\mathbf{D}_{n}$ & space of $n\times n$ diagonal matrices  \\
$I_n$ & $n\times n$ identity matrix \\
$\odot$ & component-wise matrix multiplication\\
\bottomrule
\end{tabular}

\end{center}

\subsection{Proof of universal approximation} \label{appen:universal_pf}
A neural operator
\begin{equation}
    \begin{aligned}
        \cN:\dA&\rightarrow\dCTU\\
        a&\mapsto\cN(a)
    \end{aligned}
\end{equation}
is defined as the following form 
\begin{equation}
\label{eq:tfno}
\cN(a)=\cQ\circ\cL_L\circ\cdots\cL_1\circ\cP(a)
\end{equation}
with a lifting operator
\[
\begin{aligned}
    \cP:\dA&\rightarrow\dCTU[v]\\
    \cP(a)(t,x)&\coloneqq Pa(x),&P\in\bR^{d_v\times d_a}
\end{aligned}
\]
for all $a\in\dA$, $x\in \Omega$, and a projection operator
\begin{align*}
    \cQ:\dCTU[v]&\rightarrow\dCTU\\
    \cQ(v)(t,x)&\coloneqq Qv(t,x),&Q\in\bR^{d_u\times d_v}.
\end{align*}
Moreover, as introduced in Section \ref{sec:tfno} the time-dependent neural operator layer is defined by

\begin{equation}
    \begin{aligned}
    &\cL_\ell(v)(t,x)\\
    &\coloneqq\sigma\left(W_\ell(t)v(t,x)+b_\ell(t,x)+\left(\cK\left(a;\theta_\ell\right)v\right)(t,x)\right)\\ 
    &\coloneqq\sigma\left(W_\ell(t) v(t,x)+b_\ell(t,x)+\cF_X^{-1}\left(R_\ell\left(t,\xi\right)\cdot\cF_X\left(v\right)\left(\xi\right)\right)\left(x\right)\right)\\
    &\coloneqq\sigma\left(W_\ell\psi_{w,\ell}(t) v(t,x)+ \psi_{b,\ell}(t)b_\ell(x)+
    \cF_X^{-1}\left(\varphi_{\ell}(t,\xi) R_{\ell}(\xi)\cdot\cF_X(t,v)(\xi)\right)(t,x)
    \right).
    \end{aligned}
\end{equation}

Before we establish a proof of the universality of CTFNO, let us start by recalling the following Lemma provided in \citep{kovachki2021universal}.
\begin{lemma}[\citep{kovachki2021universal}] \label{lem:fno}
For FNO introduced in Section \ref{sec:fno}, the following holds.
\begin{enumerate}[label=(\alph*)]
    \item Let $\cG: H^{s}\left(\bT^d;\bR^{d_a}\right) \rightarrow H^{s'}\left(\bT^d;\bR^{d_u}\right)$ be continuous operator with $s,s'\ge 0$ and $K\subset H^s$ be a compact subset. Then for any $\eps>0$ there exists $N\in \bN$, such that for all $a\in K$
    \[
    \norm[L^2]{\cG(a)-\cG_N(a)}<\eps,
    \]
    where $\cG_N:H^{s}\left(\T^{d}\right)\rightarrow L^{2}\left(\T^{d}\right)$ is defined by $\cG_N\left(a\right)\coloneqq P_N\cG\left(P_N a\right)$.
    \item Let $B>0$ and $\eps>0$ be given. Then there exists a FNO $\cN_{FT}$ such that for all $v\in L^2$ with $\norm[L^2]{v}\le B$
    \[
    \norm[\infty]{\cF_N\circ P_Nv-\cN_{FT}v}<\eps.
    \]
    \item Let $B>0$ and $\eps>0$ be given. Then there eixsts a FNO $\cN_{IFT}$ such that for all $\{\hat{v}(\xi):\xi\in\cK_N\}$ with $\left\|\hat{v}\right\|_{L^\infty(\cK_N)}\le B$
    \[
    \norm[L^2]{\cF^{-1}_N\left(\left\{\hat{v}(\xi)\right\}\right)-\cN_{IFT}\left(\left\{\hat{v}(\xi)\right\}\right)}<\eps.
    \]
    \item Let $B>0$ and $\eps>0$ be given. Then there exists a FNO $\widehat{\cN}$ such that for any $\left\{\hat{v}(\xi)\right\}\in\bC^{\cK_N}$ with $\left\|\hat{v}\right\|_{L^\infty(\cK_N)}\le B$ such that
    \[
    \norm[\infty]{\widehat{\cG}_N\left(\left\{\hat{v}(\xi)\right\}\right)-\widehat{\cN}\left(\left\{\hat{v}(\xi)\right\}\right)}<\eps.
    \]
\end{enumerate}
\end{lemma}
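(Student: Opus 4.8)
The plan is to treat the four parts separately, since (a) is a purely analytic reduction statement about Fourier truncation, while (b)--(d) are constructive claims that specific (non)linear maps can be realized by FNO architectures to arbitrary accuracy. I would first dispatch (a), then build (b) and (c) around the observation that the discrete Fourier transform and its inverse are bounded \emph{linear} maps that an FNO can emulate on a fixed bounded set, and finally obtain (d) from a finite-dimensional universal approximation argument embedded in the pointwise channel structure of an FNO.

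For (a), I would fix $\eps>0$ and split the error by the triangle inequality as
\[
\norm[L^2]{\cG(a)-P_N\cG(P_Na)}\le \norm[L^2]{\cG(a)-P_N\cG(a)}+\norm[L^2]{P_N\left(\cG(a)-\cG(P_Na)\right)}.
\]
For the first term I would use that $\cG(K)$ is compact in $H^{s'}$ (continuous image of the compact $K$), that $H^{s'}\hookrightarrow L^2$ for $s'\ge0$, and that the Fourier projections satisfy $P_N\to I$ strongly in $L^2$ with $\norm[L^2]{P_N}\le 1$; uniform boundedness plus strong convergence upgrade to uniform convergence on the compact set $\cG(K)$, making $\sup_{a\in K}\norm[L^2]{\cG(a)-P_N\cG(a)}$ small for large $N$. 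For the second term I would drop $P_N$ using $\norm[L^2]{P_N}\le 1$, then combine the uniform continuity of $\cG$ on a compact $H^s$-neighborhood of $K$ with the uniform convergence $\sup_{a\in K}\norm{P_Na-a}\to0$; the latter follows from a total-boundedness argument, covering $K$ by finitely many small $H^s$-balls and using that $P_N$ is an $H^s$-contraction on the off-centers together with pointwise convergence at the centers.

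For (b) and (c), the key point is that $v\mapsto \cF_N P_N v$ and $\{\hat v(\xi)\}\mapsto \cF_N^{-1}(\{\hat v(\xi)\})$ are bounded linear maps, and a Bessel/Cauchy--Schwarz estimate gives $\|\hat v\|_{L^\infty(\cK_N)}\le \norm[L^2]{v}\le B$, so every relevant quantity lives on a fixed bounded set. A single spectral convolution $\cF^{-1}(R\cdot\cF v)$ can read off and redistribute the truncated modes, so up to a fixed linear encoding of the finitely many coefficients into channels, the target reduces to an affine map on a bounded domain. The one genuinely nonlinear obstruction is the activation $\sigma$ inside each Fourier layer; I would neutralize it by the standard scaling trick, approximating an affine (indeed the identity) map on a bounded interval by $x\mapsto c_2\,\sigma(c_1x+c_0)+c_3$, valid for non-polynomial $\sigma$, which yields the $L^\infty$ bound for (b) and the $L^2$ bound for (c) after composing with the bounded encode/decode steps.

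For (d), I would note that $\widehat{\cG}_N$ is a continuous map between finite-dimensional spaces restricted to the compact ball $\{\|\hat v\|_{L^\infty(\cK_N)}\le B\}$, and that when FNO inputs are constant in $x$ the spectral term is inert and each layer collapses to $\sigma(W_\ell(\slot)+b_\ell)$, i.e.\ an ordinary feedforward layer acting on the coefficient channels. Stacking such layers yields a standard MLP, so the classical finite-dimensional universal approximation theorem delivers $\widehat{\cN}$ with $\norm[\infty]{\widehat{\cG}_N-\widehat{\cN}}<\eps$ on that ball. The main obstacle across (b)--(d) is the bookkeeping of encoding and decoding the finite Fourier data $\{\hat v(\xi):\xi\in\cK_N\}$ into and out of the channel/function representation consistently across the three constructions, while keeping every intermediate quantity inside a fixed bounded set so that both the activation-linearization and the MLP universal approximation apply; in particular, making the ostensibly linear steps (b) and (c) exact-to-$\eps$ despite the nonlinearity $\sigma$ is where the real care is required.
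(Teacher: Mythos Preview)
The paper does not actually prove this lemma: it is stated with a citation to \cite{kovachki2021universal} and used as a black box in the subsequent proof of Theorem~\ref{thm:universal-tfno}. So there is no ``paper's own proof'' to compare against beyond the reference.

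That said, your sketch is a faithful outline of how the cited result is established. Part~(a) via compactness, uniform continuity of $\cG$, and uniform convergence of $P_N$ on compacta is exactly the standard reduction. Parts~(b)--(c) indeed hinge on encoding the finitely many Fourier coefficients into channels and neutralizing the activation on a bounded set; and part~(d) reduces to finite-dimensional universal approximation once one observes that on constant-in-$x$ inputs the Fourier layer degenerates to an MLP layer. The bookkeeping you flag---keeping all intermediate quantities in a fixed bounded set so the $\sigma$-linearization and the MLP approximation remain valid---is precisely where the technical work in \cite{kovachki2021universal} lies, so your identification of the main obstacle is accurate.
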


In what follows, we prove our main theorem showing that the proposed CTFNO is a universal approximator of continuous operators. The formal statement of the \cref{thm:universal} is given as follows:
\begin{theorem}[Universal approximation of CTFNO]
\label{thm:universal-tfno}
Let $s,s'\ge0$. Suppose $\cG:\dH{\left(\bT^d;\bR^{d_a}\right)}\rightarrow\dCTH[s']{\left(\bT^d;\bR^{d_u} \right)}$ is a continuous operator and $K\subset \dH{\left(\bT^d;\bR^{d_a}\right)}$ is a compact subset. Then for any $\epsilon>0$, there exists a CTFNO $\cN:\dH{\left(\bT^d;\bR^{d_a}\right)}\rightarrow\dCTH[s']{\left(\bT^d;\bR^{d_u} \right)}$, of the form \eqref{eq:tfno}, continuous as an operator $\dH{\left(\bT^d;\bR^{d_a}\right)}\rightarrow C\left([0,T]\times H^{s'}\left(\bT^d;\bR^{d_u} \right)\right)$, such that
\[
\sup_{a\in K}\|\cG(a)-\cN(a)\|_{C\left([0,T]\times H^{s'}\left(\bT^d;\bR^{d_u} \right)\right)}<\epsilon.
\]
\end{theorem}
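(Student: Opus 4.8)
The plan is to reduce the continuous-time universal approximation statement to the known universal approximation property of the standard FNO (Lemma~\ref{lem:fno}), handling the extra time variable by treating $t$ as a parameter and discretizing it. First I would observe that, since $\cG:\dH{\left(\bT^d;\bR^{d_a}\right)}\rightarrow C\left([0,T]\times H^{s'}\right)$ is continuous and $K$ is compact, the set $\cG(K)$ is compact in $C\left([0,T]\times H^{s'}\right)$; in particular the family $\{t\mapsto \cG(a)(t)\}_{a\in K}$ is uniformly equicontinuous in $t$. Combined with the spectral-truncation step (part (a) of Lemma~\ref{lem:fno}, applied uniformly in $t$), this lets me replace $\cG$ by a ``finitely resolved'' operator $\cG_N(a)(t)\coloneqq P_N\cG(P_N a)(t)$ at the cost of $\eps/4$ uniformly over $a\in K$ and $t\in[0,T]$. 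After this reduction everything lives in the finite-dimensional spaces $L^2_N$ and $\bC^{\cK_N}$, so the $t$-dependence is carried by a continuous map $[0,T]\times\R^{(\text{coeffs})}\to\R^{(\text{coeffs})}$.

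Next I would discretize time: choose a uniform partition $0=t_0<t_1<\dots<t_M=T$ fine enough that, by equicontinuity, $\|\cG(a)(t)-\cG(a)(t_j)\|_{H^{s'}}<\eps/4$ for all $a\in K$ whenever $|t-t_j|$ is at most the mesh size. Then it suffices to approximate, for each $j$, the finite-dimensional continuous map $a\mapsto\cG_N(a)(t_j)$ by the ``time slice at $t_j$'' of a CTFNO, and to interpolate between the slices. This is where the time-modulation structure \eqref{eq:t_modulation} does the work: the time-embedding networks $\varphi(t),\psi(t)$ together with the learnable maps $A_\ell,B_\ell$ can realize, up to $\eps$-accuracy, any prescribed continuous assignment $t\mapsto(W_\ell(t),R_\ell(t,\xi))$ on the compact interval $[0,T]$ — this is just the classical universal approximation theorem for the two-layer sinusoidal-embedding networks defining $\varphi,\psi$, applied componentwise on the finitely many frequencies in $\cK_N$ and the $d_v\times d_v$ weight entries. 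So I would first build, via Lemma~\ref{lem:fno}(b)--(d), a static FNO whose composition $\cN_{IFT}\circ\widehat{\cN}\circ\cN_{FT}$ approximates $\cG_N(\slot)(t_j)$ for each fixed $j$; then I would feed the $t$-dependence through $\varphi_\ell,\psi_\ell$ so that plugging $t=t_j$ recovers (approximately) the $j$-th static network, and for general $t$ the layer weights vary continuously — hence the output is continuous in $t$, giving the required continuity of $\cN$ as a map into $C\left([0,T]\times H^{s'}\right)$.

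The main obstacle I anticipate is the \emph{uniformity in $t$} of the spectral-truncation and FNO-approximation steps: Lemma~\ref{lem:fno} is stated for a single continuous operator and a single compact set, whereas here I need the truncation level $N$ and the bounds $B$ (on $\|\cF_Nv\|$, etc.) to be chosen once and for all, valid simultaneously for every $t\in[0,T]$. Resolving this requires viewing $\hat K\coloneqq\{\cG(a)(t):a\in K,\ t\in[0,T]\}$ as a single compact subset of $H^{s'}$ — which is legitimate because $\cG(K)$ is compact in $C([0,T];H^{s'})$ and hence so is its ``evaluation image'' — and then applying Lemma~\ref{lem:fno}(a) to this compact set, and similarly fixing the a~priori bounds $B$ from the compactness of the relevant coefficient sets. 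A secondary technical point is controlling the interpolation error between time-grid points: one must check that the CTFNO output, as a function of $t$, stays within $\eps/4$ of the grid values, which follows from the (uniform) Lipschitz/continuity properties of $\sigma$, of the fixed matrices $P,Q,W_\ell,R_\ell$, and of the time-embedding networks on the compact set $[0,T]$. Once these uniformity issues are handled, assembling the four error contributions ($\eps/4$ each from spectral truncation, time discretization, the static FNO approximation, and time-embedding realization) via the triangle inequality completes the proof.
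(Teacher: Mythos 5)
There is a genuine gap, and it sits exactly where your proposal leans on ``interpolating between the slices.'' Your front end is fine (and is a legitimate variant of the paper's \cref{lem:lem1}: the paper gets a uniform truncation level $N$ by partitioning $[0,T]$ and applying \cref{lem:fno}(a) at finitely many time slices, while you get it from compactness of the evaluation image $\{\cG(a)(t):a\in K,\,t\in[0,T]\}$; both work). The problem is the network construction. You build, for each grid time $t_j$, a static FNO $\cN_{IFT}\circ\widehat{\cN}^{(j)}\circ\cN_{FT}$ approximating $\cG_N(\slot)(t_j)$, and then assert that the modulation maps $\varphi_\ell,\psi_\ell$ can ``realize any prescribed continuous assignment $t\mapsto(W_\ell(t),R_\ell(t,\xi))$'' so that plugging in $t=t_j$ recovers the $j$-th network. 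But \cref{lem:fno}(d) is a pure existence statement: the slice networks $\widehat{\cN}^{(j)}$ need not share an architecture, and even if you force a common width and depth, no \emph{continuous-in-$t$ selection} of good weights has been exhibited. Interpolating the weights of $\widehat{\cN}^{(j)}$ and $\widehat{\cN}^{(j+1)}$ does not interpolate the functions they compute (two networks realizing nearly the same map can have far-apart parameters, e.g.\ up to neuron permutation, and a weight-interpolated network can compute something entirely different), so for $t$ strictly between grid points you have no bound relating $\cN(\slot)(t)$ to $\cG_N(\slot)(t)$. Your fallback --- that the output ``stays within $\eps/4$ of the grid values by Lipschitz continuity'' --- only holds if the weights at $t$ stay close to the weights at $t_j$, which contradicts the requirement that they travel all the way to the (possibly distant) weights at $t_{j+1}$ within the same cell. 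This is the missing idea, not a routine uniformity check.

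The paper avoids this selection/interpolation problem entirely. After truncation, $\widehat{\cG}_N$ is a single continuous function of $(t,\{\hat v(\xi)\})$ on the compact set $[0,T]\times\{|\hat v|\le B''\}\subset[0,T]\times\bC^{\cK_N}$, and \cref{lem:time_cybenko} shows the time-modulated layers can treat $t$ as an \emph{extra input coordinate}: the first layer uses $\psi_{w,1}(t)=0$, $\psi_{b,1}(t)=t\mathbf{1}$ to append $t$ to the state, after which one joint application of Cybenko's theorem in $(t,x)$ yields a single network $\widehat{\cN}$ accurate uniformly over all $t\in[0,T]$ simultaneously --- no time grid, no per-slice networks, no path of weights. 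If you want to salvage your route, you would have to either prove a continuous selection theorem for approximating weights (nontrivial) or build one wide network containing all slice networks in parallel and gate their \emph{outputs} with a $t$-dependent partition of unity, checking carefully that the CTFNO modulation (which scales pre-activations inside $\sigma$, not outputs) can implement such gating; as written, neither is done, so the argument does not go through.
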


The proof is essentially split into two parts. The first part states that the Fourier projection can approximate continuous operators.
To simplify notation, we use shorthand
$H^{s}=\dH{\left(\bT^d;\bR^{d_a}\right)}$ , $H^{s'}=H^{s'}\left(\bT^d;\bR^{d_u} \right)$ and  $a_t(x)\coloneqq a(t,x)$ in the remainder of the paper.
\begin{lemma}\label{lem:lem1}
Let $\cG:\dCTH[s]{}\rightarrow \dCTH[s']{}$ be a continuous operator and $K\subset \dCTH[s]{}$ a compact subset. Then for any $\eps>0$ there exists $N\in \bN$, such that for all $a\in K$ and $t\in[0,T]$
    \[
    \norm[H^{s'}]{\cG(a)_t-\cG_N(a)_t}<\eps.
    \]
\end{lemma}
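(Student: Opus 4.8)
The plan is to treat \cref{lem:lem1} as the time-parametrized analogue of \cref{lem:fno}(a), with $\cG_N$ understood exactly as there, namely $\cG_N(a)_t\coloneqq P_N\bigl(\cG(P_Na)_t\bigr)$, where $P_N$ acts in the spatial variable on each time slice and $(P_Na)_t=P_N(a_t)$. Two standard facts drive the argument: (i) the Fourier projections $P_N$ are non-expansive on $H^s$ and converge strongly to the identity there (and likewise on $H^{s'}$), hence converge \emph{uniformly} to the identity on every compact subset; and (ii) a continuous operator on $\dCTH[s]{}$ is uniformly continuous on every compact subset and maps compact sets to compact sets.

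First I would remove the dependence on $t$ by a compactness reduction. The evaluation map $(a,t)\mapsto a_t$ from $\dCTH[s]{}\times[0,T]$ to $H^s$ is jointly continuous (a routine triangle-inequality estimate using continuity of each $a\in K$), so $\widetilde K\coloneqq\{a_t:a\in K,\ t\in[0,T]\}$ is a compact subset of $H^s$. By fact~(i), $\delta_N\coloneqq\sup_{b\in\widetilde K}\|b-P_Nb\|_{H^s}\to 0$ as $N\to\infty$, and therefore $\sup_{a\in K}\|a-P_Na\|_{\dCTH[s]{}}\le\delta_N\to 0$, since each slice $a_t$ lies in $\widetilde K$.

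Next I would build a single compact set of inputs on which to run the estimate. Let $\widehat K$ be the closure in $\dCTH[s]{}$ of $K\cup\{P_Na:a\in K,\ N\ge 1\}$; a subsequence argument shows $\widehat K$ is compact (given a sequence $P_{N_k}a_k$, compactness of $K$ gives $a_k\to a\in K$ along a subsequence; if some further subsequence has $N_k\equiv N$ use continuity of the fixed $P_N$, while if $N_k\to\infty$ then $\|P_{N_k}a_k-a\|_{\dCTH[s]{}}\le\|a_k-a\|_{\dCTH[s]{}}+\delta_{N_k}\to 0$). Then $\cG$ is uniformly continuous on $\widehat K$, the image $\cG(\widehat K)$ is compact in $\dCTH[s']{}$, and applying the evaluation argument once more, $\widehat Q\coloneqq\{\cG(b)_t:b\in\widehat K,\ t\in[0,T]\}$ is compact in $H^{s'}$, so $\delta'_N\coloneqq\sup_{c\in\widehat Q}\|c-P_Nc\|_{H^{s'}}\to 0$. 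To conclude, for $a\in K$ and $t\in[0,T]$ I split $\|\cG(a)_t-\cG_N(a)_t\|_{H^{s'}}\le\|\cG(a)_t-\cG(P_Na)_t\|_{H^{s'}}+\|\cG(P_Na)_t-P_N\cG(P_Na)_t\|_{H^{s'}}\le\|\cG(a)-\cG(P_Na)\|_{\dCTH[s']{}}+\delta'_N$, where the second inequality uses $\cG(P_Na)_t\in\widehat Q$; given $\eps>0$, I pick $\eta>0$ from the uniform continuity of $\cG$ on $\widehat K$ at tolerance $\eps/2$ and then $N$ large enough that $\delta_N<\eta$ (so the first term is $<\eps/2$, as $a,P_Na\in\widehat K$ with $\|a-P_Na\|_{\dCTH[s]{}}\le\delta_N$) and $\delta'_N<\eps/2$.

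I expect the only genuinely non-routine steps to be the two compactness claims — that $\{a_t:a\in K,\ t\in[0,T]\}$ is compact in $H^s$, and, more delicately, that $K\cup\{P_Na:a\in K,\ N\ge1\}$ has compact closure in $\dCTH[s]{}$ — since it is precisely these that upgrade the merely pointwise strong convergence $P_N\to\mathrm{Id}$, together with the continuity of $\cG$, to the uniformity over $a$ \emph{and} $t$ that the statement demands.
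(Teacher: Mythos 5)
Your proof is correct, but it takes a genuinely different route from the paper's. The paper handles the time variable by discretization: using that $\cG(K)$ is compact in $C([0,T]\times H^{s'})$ (hence uniformly equicontinuous in $t$), it fixes a fine partition $0=t_0<\dots<t_M=T$, applies the static approximation result, Lemma~\ref{lem:fno}(a), to the frozen-time operators $a\mapsto\cG(a)_{t_i}$ to obtain $N_i$ at each grid time, takes $N>\max_i N_i$, and closes with a three-term triangle inequality between $t$ and the nearest $t_i$. You instead prove the statement directly, never invoking Lemma~\ref{lem:fno}(a) or a time partition: you upgrade the strong convergence $P_N\to\mathrm{Id}$ to uniform convergence on the compact slice sets $\widetilde K=\{a_t\}$ and $\widehat Q=\{\cG(b)_t\}$, enlarge $K$ to the compact set $\widehat K\supset\{P_Na\}$ so that uniform continuity of $\cG$ applies to the perturbed inputs, and split $\cG(a)_t-\cG_N(a)_t$ into an input-perturbation term and a projection term. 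What each buys: the paper's reduction is shorter because it outsources the analytic work to the known static lemma, at the cost of needing equicontinuity not only of $t\mapsto\cG(a)_t$ but implicitly also of $t\mapsto\cG_N(a)_t=P_N\cG(P_Na)_t$ at non-grid times, where $P_Na\notin K$ in general — precisely the point your enlarged set $\widehat K$ is built to handle, so your argument is self-contained, uniform in $t$ without interpolation, and plugs that small hole explicitly; it is essentially the Kovachki-et-al. projection argument redone in the time-parametrized setting rather than a reduction to it.
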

\begin{proof}
Since $\cG$ is uniformly continuous on a compact domain $\cG(K)$, for any $\eps>0$ there exists $\delta>0$ such that
\[
\norm[H^{s'}]{\cG(a)_{t_1}-\cG(a)_{t_2}}<\eps,\quad\forall a\in K,
\]
provided $\left|t_1-t_2\right|<\delta$.
Let $0=t_0<t_1<\ldots<t_M=T$ be a partition of $[0,T]$ with $t_{i+1}-t_i<\delta$
and $\cG_i:H^s\rightarrow H^{s'}$ be defined by $a\mapsto \cG(a)_{t_i}$ for each $i$. Then, (a) in Lemma \ref{lem:fno} furnishes that there exists $N_i\in\bN$ such that
\[
\norm[H^{s'}]{\cG(a)_{t_i}-\cG_{N_i}(a)_{t_i}}<\eps,\quad\forall a\in K.
\]
Combining all, for $N>\max\left\{N_i:i=0,\ldots,M\right\}$, we have 
\[
\begin{aligned}
    &\norm[H^{s'}]{\cG(a)_t-\cG_N(a)_t}\\
    &\quad\le\norm[H^{s'}]{\cG(a)_t-\cG(a)_{t_i}}+\norm[H^{s'}]{\cG(a)_{t_i}-\cG_N(a)_{t_i}}+\norm[H^{s'}]{\cG_{N}(a)_{t_i}-\cG_N(a)_t}\\
    &\quad<3\eps,
\end{aligned}
\]
for any $a\in K$ and $t\in[0,T]$. This proves the lemma.
\end{proof}

The second part is to prove that any time-dependent continuous function can be approximated by a three-layered feed forward network combined with the proposed time-module.
\begin{lemma} \label{lem:time_cybenko}
Let $f:[0,T]\times\bR^{d_x}\rightarrow \bR^{d_y}$ be a continuous function and $K\subseteq \bR^{d_x}$ a compact subset. Then for any $\epsilon>0$, there exists a three-layered feed forward network with time
\begin{equation*}
    \begin{aligned}
    \cN&\coloneqq \cQ\circ\cL_2\circ\cL_1,\\
    \cL_\ell(t,x)&\coloneqq\left(t,\sigma\left(\psi_{w,\ell}(t)W_\ell x+\psi_{b,\ell}(t)b_\ell\right)\right),\\
    \cQ(t,x)&=Qx,
    \end{aligned}
\end{equation*}
satisfying
\[
\norm[]{\cN(t,x)-f(t,x)}<\eps,\quad\forall (t,x)\in [0,T]\times K,
\]
where $\sigma$ is an activation function, $\psi_{w,\ell}:[0,T]\rightarrow \mathbf{D}_{d_v}$ and $\psi_{b,\ell}:[0,T]\rightarrow\bR^{d_v}$ are continuous, $Q$ is a projection matrix, and $W_\ell,b_\ell$ are weight matrix and bias vector for each $\ell$, respectively.
\end{lemma}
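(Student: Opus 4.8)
The plan is to reduce Lemma~\ref{lem:time_cybenko} to the classical Cybenko/Hornik universal approximation theorem by treating time $t$ as just another spatial coordinate, and then show that the specific time-modulated architecture (with diagonal multipliers $\psi_{w,\ell}(t)$ acting on the hidden layer) is rich enough to realize the same class of approximants. First I would observe that $f:[0,T]\times\bR^{d_x}\to\bR^{d_y}$ restricted to the compact set $[0,T]\times K$ is a continuous function on a compact domain, so by the standard universal approximation theorem there is a two-layer network $x\mapsto Q\,\sigma(Ax+c)$ — here with augmented input $(t,x)$ — that approximates $f$ uniformly to within $\eps$. Write the first affine map as $A(t,x)^\top + c = A_t t + A_x x + c$, so that the hidden preactivation is $A_x x + (A_t t + c)$, i.e.\ a fixed linear map of $x$ plus a time-dependent bias.

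The main obstacle is architectural bookkeeping: the target architecture in the lemma does not allow an arbitrary time-dependent bias of the form $A_t t + c$ directly; it only allows $\psi_{w,\ell}(t) W_\ell x + \psi_{b,\ell}(t) b_\ell$ with $\psi_{w,\ell}$ diagonal. To handle the time-dependent affine term I would use the extra layer $\cL_2$ and the fact that $t$ is carried through each layer unchanged (each $\cL_\ell$ outputs the pair $(t,\cdot)$). Concretely, in the first layer I can fold the time-affine contribution into the bias channel by choosing $\psi_{b,1}(t)$ to be a continuous vector-valued function encoding $A_t t + c$ (the lemma only requires $\psi_{b,\ell}$ continuous, which $t\mapsto A_t t+c$ certainly is) and $b_1$ the all-ones vector, while $\psi_{w,1}\equiv I_{d_v}$ and $W_1 = A_x$. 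This gives hidden activations $\sigma(A_x x + A_t t + c)$ exactly, and then $\cL_2$ and $\cQ$ can implement the remaining (time-independent) linear read-out $Q$, with $\cL_2$ set to an identity-like passthrough (take $\sigma$ applied to an affine map that is approximately linear on the relevant compact range, or absorb $\cL_2$ trivially if a two-layer reduction suffices).

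The key steps, in order, are: (i) invoke classical universal approximation on $[0,T]\times K$ to get $\widetilde f(t,x) = \sum_j q_j\,\sigma(\alpha_j t + \beta_j^\top x + \gamma_j)$ with $\sup_{[0,T]\times K}|\widetilde f - f| < \eps$; (ii) match this to the architecture by setting $W_1$ to have rows $\beta_j^\top$, reading $\psi_{b,1}(t)$ as the continuous function $t\mapsto(\alpha_j t+\gamma_j)_j$ and $b_1 = \mathbf{1}$, and $\psi_{w,1}(t)\equiv I$; (iii) verify that the second layer can be arranged to pass the hidden vector through essentially unchanged — for instance by using that $\sigma$ is (locally, on the bounded range of the first hidden layer over the compact domain) well-approximated by an affine function, or more cleanly by noting the read-out $Q$ together with $\cL_2$ can reproduce $\sum_j q_j(\cdot)_j$ up to arbitrary precision; (iv) combine the errors by the triangle inequality. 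The part requiring a little care is step (iii): making the "extra" layer $\cL_2$ harmless, which I would settle either by a linearization-of-$\sigma$ argument on a compact set or by observing that the statement only needs a network of this shape to exist, so one may first build a depth-two approximant and then pad it. Everything else is a routine application of the triangle inequality and continuity of the composed maps, which also yields the claimed continuity of $\cN$ as an operator.
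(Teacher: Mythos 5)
Your proposal is correct in substance but arranges the construction differently from the paper. The paper also reduces to Cybenko's theorem on the compact set $[0,T]\times K$, but it spends the \emph{first} layer on concatenation: it chooses $W_1=\begin{pmatrix}\mathbf{I}_{d_x}\\0\end{pmatrix}$, $b_1=(\mathbf{0}_{d_x},1)^T$ and $\psi_{b,1}(t)=t\mathbf{1}$ so that $\cL_1$ outputs (essentially) the augmented vector $(x,t)^T$, and then the pair $(\cL_2,\cQ)$ \emph{is} the classical one-hidden-layer Cybenko network applied to $(x,t)$, with $\psi_{w,2}\equiv\mathbf{I}$, $\psi_{b,2}\equiv\mathbf{1}$; no layer has to be neutralized. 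You instead host the Cybenko hidden layer in $\cL_1$, feeding the time-affine terms $\alpha_j t+\gamma_j$ through the modulated bias $\psi_{b,1}(t)\odot b_1$ (which is legitimate, since the lemma only asks that $\psi_{b,1}$ be continuous), and must then make $\cL_2$ an approximate identity so that $\cQ\circ\cL_2$ realizes the linear read-out $\sum_j q_j(\cdot)_j$. Your placement has the virtue of using the first layer's activation exactly where Cybenko needs it (the paper's write-up quietly ignores that $\sigma$ is also applied in its first layer, and its stated $\psi_{w,1}(t)=\mathbf{0}$ is evidently a typo for the identity), but it buys you the passthrough problem, and there your "first build a depth-two approximant and then pad it" remark is circular --- padding is exactly what needs justifying. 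The linearization route you name does work, but it should be carried out so the constant offset is cancelled: either use paired units $\sigma(c+\delta z)-\sigma(c-\delta z)\approx 2\sigma'(c)\delta z$ at a point where $\sigma$ is differentiable with $\sigma'(c)\neq 0$, or, for ReLU, shift the bounded hidden values into the linear region with a large $c$ and add one auxiliary constant unit so that $Q$ can subtract $c\sum_j q_j$ exactly; this imposes a (mild) extra hypothesis on $\sigma$ that the paper's arrangement does not need. With that step made precise, your argument is a valid alternative proof of the lemma.
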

\begin{proof}
Without loss of generality, we may assume $d_y=1$. Construct $W_1\in\bR^{(d_x+1)\times d_x}$ and $b_1\in\bR^{d_x+1}$ as 
\[
W_1=\begin{pmatrix}\mathbf{I}_{d_x}\\0\end{pmatrix}, b_1=\begin{pmatrix}\mathbf{0}_{d_x}\\1\end{pmatrix},
\]
with $\psi_{w,1}(t)=\mathbf{0}_{d_x+1}$, $\psi_{b,1}(t)=t\mathbf{1}_{d_x+1}$. Then, $\cL_1(t,x)=\left(t,(x,t)^T\right)\in[0,T]\times\bR^{d_x+1}$. By the universal approximation theorem of \citet{cybenko1989approximation}, for any $\eps>0$ there exists $W_2\in\bR^{d_v\times(d_x+1)}$, $b_2\in\bR^{d_v}$, and $Q\in\bR^{1\times d_v}$ such that
\[
\left|Q\sigma\left(W_2\begin{pmatrix}x\\t\end{pmatrix}+b_2\right)-f(t,x)\right|<\eps, \quad\forall (t,x)\in[0,T]\times K.
\]
By setting $\psi_{w,2}=\mathbf{I}_{d_v}$, $\psi_{b,2}=\mathbf{1}_{d_v}$, we have the desired $\cN$.
\end{proof}

With these lemmas out of the way, we are ready to provide a proof of Theorem \ref{thm:universal}. 

For given continuous operator
$\cG:\dCTH[s]{}\rightarrow C\left([0,T]\times L^2\right){}$, compact subset $K\subset \dCTH[s]{}$ and small $\eps>0$, Lemma \ref{lem:lem1} gives an integer $N\in\bN$ such that
\[
\sup_{a\in K}\norm[L^2]{\cG(a)_t -\cG_N(a)_t}<\eps,
\]
for each time $t\in[0,T]$. 
Since $K$ is compact, there exists $B>0$ satisfying $\norm[L^2]{v_t}\le B$ for all $v\in K$ and $t\in[0,T]$.
For given small $\delta\in(0,1)$, let $\cN_{FT}$ be a CTFNO such that for all function $w$ with $\norm[L^2]{w}\le B$
\[
\norm[\infty]{\cF_N\circ P_Nw-\cN_{FT}w}<\delta.
\]
In fact, $\delta$ is chosen according to the Lipschitz constant of $\widehat{\cN}$, which will be constructed.
As claimed by $(c)$ in Lemma \ref{lem:fno}, there exists a CTFNO $\cN_{IFT}$ such that for all $\{\hat{w}(\xi):\xi\in\cK_N\}$ with $|\hat{w}(\xi)|\le \sup_{t\in[0,T]}\sup_{a\in K}\norm[\infty]{\cF_N\circ P_N(a_t)}+1 \le B'=C(N)\sup_{t\in[0,T]}\sup_{a\in K}\left\Vert a_t\right\Vert_2 +1$
\[
\norm[L^\infty]{\cF_N^{-1}\left(\{\hat{w}(\xi)\}\right)-\cN_{IFT}\left(\{\hat{w}(\xi)\}\right)}<\eps.
\]
The continuity of $\cF_N^{-1}$ and $\cN_{IFT}$ leads to the existence of $B''>0$ that satisfies 
\[
\norm[\infty]{\cF_N^{-1}\left(\{\hat{w}(\xi)\}\right)}\le B'',\quad \norm[\infty]{\cN_{IFT}\left(\{\hat{w}(\xi)\}\right)}\le B'',
\]
for all $\{\hat{w}(\xi):\xi\in\cK_N\}$ with $|\hat{w}(\xi)|\le B''.$
By (d) in Lemma \ref{lem:fno} and Lemma \ref{lem:time_cybenko}, we can construct a CTFNO $\widehat{\cN}$ such that for any $\{\hat{w}(\xi)\}\in\bC^{\cK_N}$ with $\left\|\hat{w}\right\|_{L^\infty}\le B''$,
\[
\sup_{t\in\left[0,T\right]}\norm[\infty]{\widehat{\cG}_N\left(t,\{\hat{w}(\xi)\}\right)-\widehat{\cN}\left(t,\{\hat{w}(\xi)\}\right)}<\eps.
\]
Note that the construction of $\widehat{\cN}$ is independent on $\delta$ and $\cN_{FT}$. Now we set $\delta >0$ so small that
\[
\operatorname{Lip}\left(\widehat{\cN}\right)\sup_{t\in\left[0,T\right]}\norm[\infty]{\cF_N\circ P_N(a_t)-\cN_{FT}(a_t)}<\eps,
\]
for all $a\in K$, which implies that
\[
\sup_{a\in K}\norm[L^2]{\widehat{\cN}\circ\cF_N\circ P_N(a_t)-\widehat{\cN}\circ\cN_{FT}\left(a_t\right)}<\eps.
\]
Finally, for each $t\in\left[0,T\right]$ we can deduce the following inequality:
\begin{equation}
    \begin{aligned}
    &\sup_{a\in K}\norm[L^2]{\cG(a)_t-\cN(a)_t}\\
    &\quad<\eps+\sup_{a\in K}\norm[L^2]{\cG_N(a)_t - \cN(a)_t}\\
    &\quad= \eps+\sup_{a\in K}\norm[L^2]{\cF^{-1}_N\circ\widehat{\cG}_N\circ\cF_N\circ P_N\left(a_t\right) - \cN_{IFT}\circ\widehat{\cN}\circ\cN_{FT}\left(a_t\right)}\\
    &\quad\le\eps+\sup_{a\in K}\norm[L^2]{\cF^{-1}_N\circ\widehat{\cG}_N\circ\cF_N\circ P_N\left(a_t\right) - \cF_N^{-1}\circ\widehat{\cN}\circ\cN_{FT}\left(a_t\right)}\\
    &\qquad\quad+\sup_{a\in K}\norm[L^2]{\cF_N^{-1}\circ\widehat{\cN}\circ\cN_{FT}\left(a_t\right) - \cN_{IFT}\circ\widehat{\cN}\circ\cN_{FT}\left(a_t\right)}\\
    &\quad\le\eps+\sup_{a\in K}\norm[\infty]{\widehat{\cG}_N\circ\cF_N\circ P_N\left(a_t\right) - \widehat{\cN}\circ\cN_{FT}\left(a_t\right) }\\
    &\qquad\quad +\sup_{\substack{\{\hat{w}(\xi)\}\in\bC^{\cK_N}\\
    \left\|\hat{w}(\xi)\right\|_\infty\le B''}}
    \norm[L^2]{\cF_N^{-1}\left(\{\hat{w}(\xi)\}\right)-\cN_{IFT}\left(\{\hat{w}(\xi)\}\right)}\\
    &\quad <2\eps+\sup_{a\in K}\norm[\infty]{\widehat{\cG}_N\circ\cF_N\circ P_N\left(a_t\right)-\widehat{\cN}\circ\cF_N\circ P_N\left(a_t\right)}\\
    &\qquad\quad+\sup_{a\in K}\norm[L^2]{\widehat{\cN}\circ\cF_N\circ P_N\left(a_t\right)-\widehat{\cN}\circ\cN_{FT}\left(a_t\right)}\\
    &\quad\le2\eps+\sup_{\substack{\{\hat{w}(\xi)\}\in\bC^{\cK_N}\\
    \left\|\hat{w}(\xi)\right\|_\infty\le B'}}\norm[\infty]{\widehat{\cG}_N\left(\{\hat{w}(\xi)\}\right)-\widehat{\cN}\left(\{\hat{w}(\xi)\}\right)}\\
    &\qquad\quad+\sup_{a\in K}\norm[L^2]{\widehat{\cN}\circ\cF_N\circ P_N\left(a_t\right)-\widehat{\cN}\circ\cN_{FT}\left(a_t\right)}\\
    &\quad<3\eps+\sup_{a\in K}\norm[L^2]{\widehat{\cN}\circ\cF_N\circ P_N\left(a_t\right)-\widehat{\cN}\circ\cN_{FT}\left(a_t\right)}\\
    &\quad<4\eps.
    \end{aligned}
\end{equation}
Since $t\in\left[0,T\right]$ is arbitrary, the above inequalities bring us to the desired result
\[
\sup_{a\in K}\left\Vert\cG\left(a\right)-\cN\left(a\right)\right\Vert_{\dCTH[s']{}}<\eps.
\]

\section{Stability} \label{appen:stable}
\paragraph{Proof of stability}
Suppose $\sigma$ is a Lipschitz continuous activation function with Lipschitz constant $C$, and both $\sup_{\xi}\left\Vert R\left(t,\xi\right)\right\Vert _{2}^{2}$ and $\left\Vert W\left(t\right)\right\Vert _{2}^{2}$ are bounded by $M$ for every $t$. Let $\tilde{v}$ be a perturbed initial condition with $\left\Vert \tilde{v}-v\right\Vert _{2}<\eps.$
Then by Plancherel theorem and Fourier convolution theorem, we have

\begin{align*}
 & \left\Vert\cL_\ell\left(t,\tilde{v}\right)-\cL_\ell\left(t,v\right)\right\Vert_{L^2}^2  \\
 & = \left\Vert\sigma\left(W_\ell\left(t\right)\tilde{v}\left(x\right)+\left(\mathcal{K}_{\ell}\left(t\right)\ast \tilde{v}\right)\left(x\right)\right)-\sigma\left(W_\ell\left(t\right)v\left(x\right)+\left(\mathcal{K}_{\ell}\left(t\right)\ast v\right)\left(x\right)\right)\right\Vert_{L^2}^2\\ 
 &\leq C\left\Vert W_\ell\left(t\right)\left(\tilde{v}-v\right)\left(x\right)+ \left(\cK_\ell\left(t\right)\ast\left(\tilde{v}-v\right)\right)\left(x\right)\right\Vert_{L^2}^2 \\
 & =C\intop\left|W_\ell\left(t\right)\left(\tilde{v}-v\right)\left(x\right)+\left(\mathcal{K}_{\ell}\left(t\right)\ast\left(\tilde{v}-v\right)\right)\left(x\right)\right|^{2}dx\\
 & \leq C\intop\left|W_\ell\left(t\right)\left(\tilde{v}-v\right)\left(x\right)\right|^2dx+C\intop\left|R_\ell\left(t,\xi\right)\cdot\mathcal{F}\left(\tilde{v}-v\right)\left(\xi\right)\right|^{2}d\xi\\
 & \leq C\intop\left\Vert W_\ell\left(t\right)\right\Vert _{2}^{2}\left|\left(\tilde{v}-v\right)\left(x\right)\right|^{2}dx + C\intop\left\Vert R_\ell\left(t,\xi\right)\right\Vert _{2}^{2}\left|\mathcal{F}\left(\tilde{v}-v\right)\left(\xi\right)\right|^{2}d\xi\\
 & \leq CM\intop \left|\left(\tilde{v}-v\right)\left(x\right)\right|^{2}dx + CM\intop\left|
 \mathcal{F}\left(\tilde{v}-v\right)\left(\xi\right)\right|^{2}d\xi\\
 & =2CM\left\Vert \tilde{v}-v\right\Vert _{L^{2}}^{2}\\
 & <2CM\eps.
\end{align*}



\section{Experimental Details} \label{appen:test_detail}
In this section, we present our experimental settings in detail.
All experiments were conducted on a single NVIDIA RTX 3090 GPU.

\subsection{Datasets} \label{sec:data}
\paragraph{Heat equation}
The one-dimensional heat equation used in Section \ref{sec:synthetic} takes the form 
\begin{equation}
\begin{cases}
\frac{\partial u}{\partial t}=\nu \frac{\partial^2 u}{\partial x^2}, & x\in \left(0,1\right),\ t\in\left(0,2.5\right] \\
u\left(0,x\right)=u_{0}\left(x\right), &  x\in \left(0,1\right),
\end{cases}\label{eq:heat}
\end{equation}
with periodic boundary conditions. It describes how a quantity such as heat diffuses through a given region over time.
The initial function $u_0(x)$ is generated from the Gaussian random field $\mathcal{N}\left(0,20^2\left(-\triangle +3.5^2\right)^{-2.5}\right)$, where $\triangle$ refers to the Laplacian.
The diffusivity constant 
$\nu$ is set to be  0.001 and the solution data is generated by exactly solving (\ref{eq:heat}) in Fourier space on a uniform spatial grid with resolution 1024 along time step size $\triangle t=0.05$. 

\paragraph{Burgers' equation}
The one-dimensional Burgers' equation with diffusive regularization is a canonical non-linear PDE, taking the form
\begin{equation}
\begin{cases}
\frac{\partial u}{\partial t} + u\frac{\partial u}{\partial x}=\nu \frac{\partial^2 u}{\partial x^2}, & x\in \left(0,1\right),\ t\in\left(0,1\right] \\
u\left(0,x\right)=u_{0}\left(x\right), &  x\in \left(0,1\right),
\end{cases}\label{eq:burgers}
\end{equation}
with periodic boundary condition. 
It is a dissipative nonlinear system with shock formation and has various applications including the flow of viscous fluid dynamics. The viscosity is set to  $\nu = 0.001$.
Starting from initial functions sampled from $\mathcal{N}\left(0,7^2\left(-\triangle +49\right)^{-2.5}\right)$, numerical ground truth is generated separately where the linear diffusion component is exactly solved and the remaining nonlinear part is solved in Fourier space using forward Euler method as in \citep{li2020fourier}. The spatial domain is discretized with resolution 1024.
The training set consists of 400 periodic trajectories and each trajectory has regularly-sampled time points with $\triangle t=0.005$ and we test the trained models for 100 trajectories. 

\paragraph{Diffusion-Sorption equation}
Diffusion-sorption equation is an example taken from \textsc{PDEBench}.
It represents a diffusion process influenced by sorption interactions and is wirtten as
\begin{align}
\label{eq:diff-sorp}
    \partial_t u(t,x) & = D/R(u) \partial_{xx} u(t,x),  ~~~ x  \in (0,1),
\end{align}
where $D$ is the diffusion coefficient, $R(u)$ regulates constitutive relationship.
The spatial domain is discretized to 1024, and we train models to predict solution at 50 future times from the initial function.
We compose the training and test sets by 400 and 100 trajectories, respectively.

\paragraph{Compressible Navier-Stokes equations}
The motion of viscous fluid fields is described by the following compressible Navier-Stokes equations:
\begin{equation}
\begin{cases}
     \partial_t \rho + \nabla \cdot (\rho \textbf{v}) &= 0,
     \\
     \rho (\partial_t \textbf{v} + \textbf{v} \cdot \nabla \textbf{v}) &= - \nabla p + \eta \triangle \textbf{v} + (\zeta + \eta/3) \nabla (\nabla \cdot \textbf{v}),\\
     \partial_t \left[ \epsilon + \frac{\rho v^2}{2} \right] &+ \nabla \cdot \left[ \left(\epsilon + p + \frac{\rho v^2}{2} \right) \bf{v} - \bf{v} \cdot \sigma' \right] = 0,
 \end{cases}
\end{equation}
where $\rho$ denotes the density, $\textbf{v}$ the velocity field, $p$ the pressure, $\sigma '$ the viscous stress tensor, $\eta$ the shear viscosity, and $\zeta$ the bulk viscosity.
From \textsc{PDEBench} \cite{takamoto2022pdebench}, we take a one-dimensional example generated with periodic boundary conditions, and $\eta=\zeta=0.1$.
We use the data consisting of 1024 spatial resolutions and 50 time steps, 700 for training and 200 for validation.

\paragraph{Spiral} 
Data generation of Spiral, Stiff, Sawtooth, and Square follows the implementation of \citet{holt2022neural} unless stated.
We generate the Spiral dataset by following governing equation:
\begin{equation}
    \dot{u}(t) = A \tanh{{u}(t)},
\end{equation}
where $A=\begin{pmatrix} -1/8 & 1 \\ -1 & -1/8 \end{pmatrix}$.
The other settings such as the number of training and test samples, initial value $u(0)$, and the number of timesteps follow the implementation of \citet{holt2022neural}.

\paragraph{Stiff Van der Pol Oscillator}
We generate data by following the instructions of \citep{van1927frequency}, which exhibits regions of high stiffness. The governing equation is
\begin{align}
    \dot{x} &= y, \\
    \dot{y} &= \mu \left(1-x^2\right)y - x,    
\end{align}
where $\mu=1000$. We sample initial conditions from $x(0)\in [0.1, 2]$, $y(0)=0$.

\paragraph{Sawtooth \& Square}
We also explore the benchmarks on a periodic discontinuous function $u(t)$.
We sample two datasets, namely sawtooth and square, and these are sampled by the equation $u(t) = \frac{t}{2\pi} - \lfloor \frac{t}{2\pi} \rfloor$, and $u(t) = 2\left(1-\lfloor 2\left(\frac{t}{2\pi}- \lfloor \frac{t}{2\pi} \rfloor\right)\rfloor\right)$, respectively.
We sample initial values $\left(t_0, u(t_0)\right)$ by sampling $t_0$ in uniformly at random on the interval $[0,2\pi]$.
Each trajectories are generated from intervals of $[t_0, t_0+20]$.

\paragraph{Reaction equation}
The semi-linear ODE presented in Section \ref{sec:synthetic} has an analytic solution 
\begin{equation}\label{eq:sol_reaction}
u\left(t,x\right)=\frac{f\left(x\right)e^{\rho t}}{f\left(x\right)e^{\rho t}+1-f\left(x\right)},
\end{equation}
for an initial condition $f\left(x\right)$. The reaction coefficient $\rho$ is chosen to be 6 and the domain is unit interval. 
Initial conditions randomly generated as $\frac{1}{2}\left({z_1\sin\left(2\pi k_{1}x\right) + z_2\sin\left(2\pi k_{2}x\right)}\right)+z_3e^{-x}+2$, where $z_i\sim N\left(0,1\right)$ and $k_i$ is a uniformly sampled integer.
Analytic solutions are attained up to $t=1$. 
The solution data is generated by solving (\ref{eq:sol_reaction}). The spatial grid is discretized with resolution of 100 along time step size of $\Delta t = 0.02$.

\paragraph{MuJoCo}
We use a trajectory of physical simulation for the hopper with three joints and four body parts.
Each time series is 14-dimensional, consisting of a five-dimensional position, six-dimensional velocity, and three-dimensional action.
\citet{rubanova2019latent} generated $10{\small,}000$ sequences of 200 timesteps and used 80\% of the data for training and the rest 20\% for evaluation.
For the interpolation task, they randomly sampled 100 consecutive observation timesteps. 
For the prediction task, they use all 200 observation timesteps and divide them into two parts: the first 1/3 as an input for the model and the latter 2/3 as an output to be forecasted.
For all the tasks, they randomly sampled a portion of the input timesteps at a specified rate and masked the rest of them.
Note that the values used for the subsampling ratio are 10\%, 20\%, 30\%, and 50\%.

\paragraph{PhysioNet}
This is a publicly available dataset of 8000 time series describing the stay of patients within an ICU over 48 hours.
For each patient, 41 biomedical features are irregularly observed and converted to one minutely resolution. 
In \citep{rubanova2019latent}, they used 80\% of the data for training, and the rest 20\% for evaluation, as in MuJoCo.
For the interpolation task, they did not subsample the data because the measurements were already sparse.
For the prediction task, they halved the data so that the first 1/2 timesteps were used for inputs, and the latter 1/2 timesteps were used for outputs to be forecasted.
Note that \citet{rubanova2019latent} also performed per-sequence classification experiments on PhysioNet.
Since Latent ODEs with RNN or ODE-RNN encoders compress the whole input data to a single latent vector $z_0$ and an MLP classifier receives only $z_0$ as its input, such a per-sequence classification does not depend on all the other latent vectors $z_1, \ldots, z_T$.
This is inconsistent with the goal of CTFNO, which focuses on learning representations of the inherent dynamics for all timesteps.
Hence we excluded per-sequence classification experiments in this paper.

\paragraph{Activity}
This dataset consists of time series from five individuals performing serveral activities (i.e. walking, standing, laying, etc).
Each time series includes 12 features indicating tags attached to their belt, chest and ankles.
We used the same pre-processing as \citep{rubanova2019latent}, resulting in 6554 sequences of 211 time points.
The task is to classify each time point into one of seven types of activities and we performed per-time-point classification experiments as in \citep{rubanova2019latent}.

\paragraph{Plane Vibration}
The dataset is consisted of time 0 to 73627, with five attributes recorded per timestamp. We randomly take out $10\%$ of data to make the time series irregularly sampled. Following the implementation of \citep{xia2021heavy}, we use the first $50\%$ of data as our train set, the next $25\%$ as a validation set, and the rest as a test set. We divide each set into partitions of consecutive 64 timestamps of the irregularly-sampled time series, and our goal is to forecast $8$ consecutive timestamps starting from the last timestamp of the segment.





\subsection{Network Structure} \label{appen: network_detail}

\paragraph{Time Embedding} 
In order to obtain $\varphi(t)$ and $\phi(t)$, 
we first adopt a positional embedding of transformer \citep{vaswani2017attention}.
A time encoding function converts an one-dimensional time into a multi-dimensional input by passing the time $t$ through trigonometric functions of varying frequencies: 
\[
t\mapsto\left(\sin\left(\omega_{i}t\right),\cos\left(\omega_{i}t\right)\right),\ \omega_{i}=10^{-\frac{4i}{m}} ,
\]
where $i$ runs over a range of integers $\left\{ 0,\dots,m-1\right\} $ with encoding dimension $m$.
Then, we pass MLP layers with the number of hidden dimension $c$. The contextual visualization of sharing net $\varphi(t)$ can be found in Figure \ref{fig:multihead}.



\paragraph{Fourier kernel}
Time modulation multiples a single value $\varphi_\ell\left(t,\xi\right)\in\bR$ to a $d_v\times d_v$ Fourier kernel $R_\ell\left(\xi\right)$, that is,  
\[
R_\ell\left(t,\xi\right)=\varphi_\ell\left(t,\xi\right)\cdot R_\ell\left(\xi\right).
\]
However, often there are multiple different aspects channel elements attend to, and scalar multiplication may not be a good option for it.
To attempt to lift this restriction, we employ parallel heads. Specifically, 
we divide the Fourier kernel $R_\ell$ into $h$ kernel blocks $R_{\ell}^{(i)}\left(\xi\right)\in\bR^{d_k\times d_v}$ for $i=1\dots,h$, where $h$ is the number of heads and $d_k=d_v/h$. Afterward, we concatenate the heads and combine them with a final kernel matrix. (See Figure \ref{fig:multihead}.) This operation can be expressed as:
\begin{align*}
R_\ell\left(t,\xi\right) & = \text{Concat}\left(R_\ell^{(1)},\dots,R_\ell^{(h)}\right), \\
R_\ell^{(i)}\left(t,\xi\right)&=\varphi_\ell^{(i)}\left(t,\xi\right)\cdot R_\ell^{(i)}\left(\xi\right),\ \ i=1,\dots,h.
\end{align*}
The multi-head kernel allows the model to share different representations at different channels.

\begin{figure}
    \centering
    \includegraphics[page=2,width=0.90\textwidth]{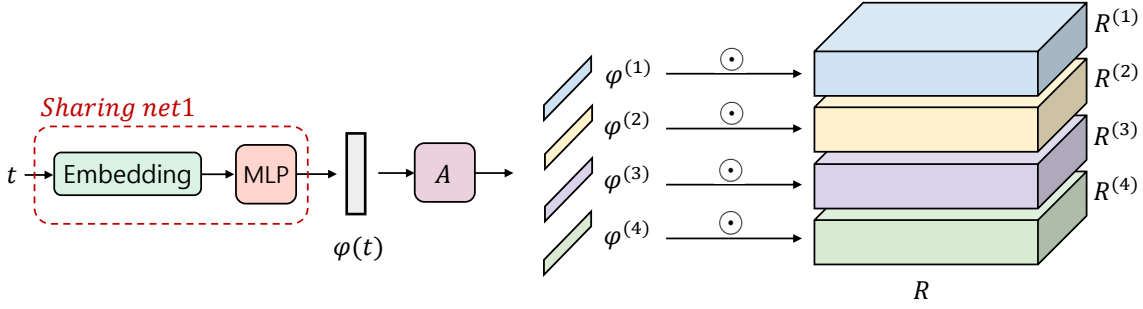}
    \caption{The architecture of multi-head CTFNO.}
    \label{fig:multihead}
\end{figure}

\paragraph{Model architecture for 2D data} 
While the Fourier kernel $R$ plays a key role in FNO models for learning the solution operator of PDEs, it requires quite many parameters ($d_v^2$ for each frequency).
This parameter redundancy worsens for two-dimensional data; images in \cref{appen:classification}.
To overcome parameter inefficiency, we modify the Fourier kernel as follows.
One modification is the use of block-diagonal structure of $R$, as proposed in \citep{guibas2021adaptive}.
This reduces the number of parameters, however, the diagonal structure does not fully mix the information between channels, which may lessen the expressive power of the model.
To compensate for this, we additionally introduce a local $1\times1$ convolution layer $V$ in Fourier space. The overall architecture can be written as
\begin{equation}
    \mathcal{L}_{\ell}\left(v\right)\left(x\right)  =\sigma\left(W_{\ell}v\left(x\right)+\mathcal{F}^{-1}\left(R_{\ell}\left(\xi\right)\cdot\mathcal{F}\left(v\right)\left(\xi\right)+V_\ell\mathcal{F}\left(v\right)\left(\xi\right)\right)\left(x\right)\right). \\
\end{equation}
We empirically show that this improves the training ability of FNO with significantly less parameters.

\paragraph{Lifting Layers for Real Application}
Fourier transform may be hard to embody discontinuous functions because Fourier filters are global sinusoidal functions.
As a consequence, FNO has difficulty in learning sharp features \citep{lu2022comprehensive}.
In many applications, data is discontinuous, such as images.
Hence, the performance of FNO models is plagued by the discontinuous nature of such input data.
Contrary to regular PDE problems where the pointwise lifting is adequate, we hereby propose an alternative lifting layer to provide a smoother form of the data.
A low pass filter is the standard way of smoothing out the data; it tends to retain the low frequency modes while truncating out the high frequency information.
To cutoff high spatial frequency, we additionally introduce a low pass filter after the original lifting layer. 
For parameter efficiency, the low pass filter is implemented by an $1\times1$ convolution layer in Fourier space, as we introduced in the preceding paragraph.  
In this way, input data is smoothed by decreasing the disparity between neighboring values without loss of information.

\paragraph{Ablation Models}
\begin{figure}[t]
    \centering
    \includegraphics[page=1,width=0.90\textwidth]{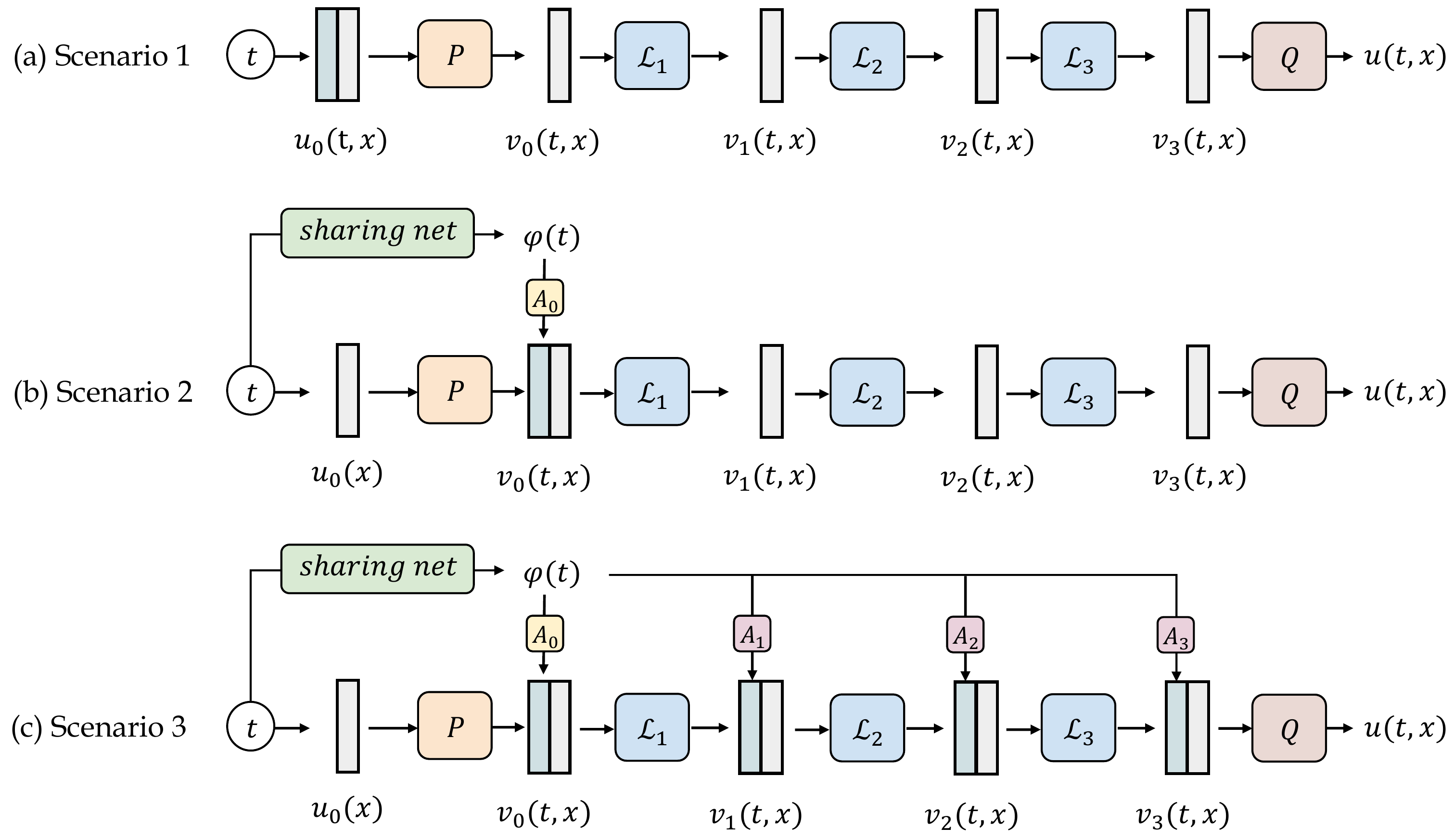}
    \caption{The visualization of architecture of three baselines: (a) Concatenates the input time $t$ to an input function. (b) Concatenates the encoded time $\varphi(t)$ to the lifted input function. (c) Concatenates the encoded time to the lifted input function and intermediate features. Here, all \fbox{A} denotes a learnable affine transform.}
    \label{fig:TFNO_ablation}
    \vspace{-10pt}
\end{figure}

In Section \ref{sec:tfno}, we construct three baselines injecting time $t$ into the network for comparative analysis of CTFNO. This section includes implementation details.
Since CTFNO puts $t$ into weights of the Fourier layers, the other way to inject $t$ is to join it in the input function $u_0(x)$ or intermediate features $v_0(x), v_1(x), \ldots, v_L(x)$. To cover variants of such cases, we constructed three other baselines adding $t$ into the input or feature space as discussed in Section \ref{sec:tfno}.
The architecture of these three scenarios is pictorized in Figure \ref{fig:TFNO_ablation}.
We choose the hyperparameters and other experimental details that delivered the best performance for the baseline models. Note that each model has different optimal optimizer settings, so we carefully choose the learning rate and its decay rate that show the best performance. 

\subsubsection{Synthetic}

\paragraph{Low dimensional data}
Low dimensional data includes Spiral, Stiff, Sawtooth, and Square. 
All the experiments on these datasets are trained for 1000 epochs with a batch size of 120.
We do not normalize data since all data are properly bounded.
Implementation of NODE, ANODE, Latent ODE, Neural Flow, and Neural Laplace follow \citep{holt2022neural}, other than the data normalization.
For CTFNO, a single data point is linearly transformed into a 16-dimensional vector.
Then, we regard the temporal dimension of 100 as a channel size and convert it to 16 by a lifting layer.
We use the learning rate of 1e-3 with no decay.
We do not use the stabilization constant and use Gaussian error Linear Units (GeLU) \citep{hendrycks2016bridging} activation in all FNO-based experiments. 
In the time-embedding layer, we employ Sigmoid Linear Units (SiLU) \citep{elfwing2018sigmoid} activation function.
Note that all experiments for CTFNO use the same activation function.

\paragraph{High-dimensional data}
In Reaction ODE and PDE experiments, which are high dimensional datasets, we set comparison models to have a large number of parameters, at least 1M.
In NODE and ANODE experiments, we set an ODE function $f(t, u(t))$ to a five layer Multi-Layer Perceptron (MLP), of 1024 units.
For ANODE, we set the augmented dimension of zeros to 100.
We employ the learning rate of 1e-5, and the number of epochs of 2000.
Other hyperparameters follow the NODE implementation of \citet{holt2022neural}.
Moreover, in the implementation of Neural Laplace, we set the latent dimension and the number of hidden units to 64.
Other hyperparameters are set the same as low-dimensional data experiments.
In the implementation of Neural Flow, we use a 32-layered coupling flow.
Each coupling flow has two hidden layers with 128 hidden dimensions.
Timesteps are embedded into 32 sinusoidal features.
We set the learning rate of 1e-3 with the decay rate of 0.8 for every 200 epochs.
Other hyperparameters for Neural Flow follow the synthetic experiments in \citep{bilovs2021neural}.
In DON and PDN experiments, we use the number of channels of 256 with six residual layers for each branch and trunk net.
We use SiLU activation, the training epochs of 20000, the batch size of 50, and the learning rate of 1e-3.
The hyperparameters of FNO-based models, including CTFNO, are summarized in Table \ref{tab:synthetic_hparams}. 
We train FNO-based models with a batch size of 20, the training epochs of 2000, and the learning rate of 1e-3 with a decay rate of 0.8 for every 100 epochs.

\begin{table}[t] 
  \caption{The number of parameters of each model in synthetic experiments. Low indicates the experiments on Spiral, Stiff, Sawtooth, and Square. High denotes the high dimensional experiments, including Heat and Burgers' equations. LODE, NF, NL, DON, and PDN stand for Latent ODE, Neural Flow, Neural Laplace, DeepONet, and POD-DeepONet, respectively.}
  \vspace{5pt}
  \centering
   \setlength\tabcolsep{5.0pt}
  \begin{tabular}{ccccccccc}
    \toprule
    Data & NODE & ANODE & LODE & NF& NL& DON & PDN & CTFNO \\
    \midrule
    Low     & 17025  &  17282 & 18565  & 18307  & 17194  &   -   &  -    &   17858 \\
    High    & 5.24M  &  5.45M &  -     & 16.57M   & 4.51M   &   1.58M   & 1.78M  &   2.38M \\
    \bottomrule
  \end{tabular}
  \label{tab:synthetic_nparams}
\end{table}

\begin{table}[t] 
  \caption{Hyperparameter settings of CTFNO, FNO-2D, and FNO-RNN on synthetic data. Low indicates the experiments on Spiral, Stiff, Sawtooth, and Square data. Layers, Modes, and Channels are the number of Fourier layers, Fourier modes, and channels, respectively. The first value in Time channels is the dimension used for time embedding and the second value is the dimension of sinusoidal embedding of time. Params denotes the size of model parameters.}
  \vspace{5pt}
  \centering
   \setlength\tabcolsep{5.0pt}
  \begin{tabular}{ccccccc}
    \toprule
    Model   & Data      & Layers& Modes & Channels  & Time channels & Params \\
    \midrule
     CTFNO  & Low       &    3  &    4  &    16     & (32, 16)    &  17858   \\
            & Reaction  &    2  &   32  &   64      & (512, 128)  &  1.82M \\
            & PDEs       &    2  &   64  &   64      & (512, 128)  &  2.38M \\
     FNO-2D  & Reaction  &   3   & (32, 16) & 32     &  -        &   7.00M\\
            & PDEs       &   3   & (64, 16) & 32     &  -        &   12.00M\\
     FNO-RNN & PDEs       &   4   &   64  &   64      & -         &   2.02M\\
     
    \bottomrule
  \end{tabular}
  \label{tab:synthetic_hparams}
\end{table}

\subsubsection{Real time-series}

Our implementation builds on open-source codes \footnote{\url{https://github.com/rtqichen/torchdiffeq}}\footnote{\url{https://github.com/YuliaRubanova/latent_ode}}\footnote{\url{https://github.com/mbilos/stribor}}\footnote{\url{https://github.com/hedixia/HeavyBallNODE}} (MIT License).
For all real time-series datasets, we refer to the results of RNN-VAE, ODE-RNN and Latent ODEs reported in \citep{rubanova2019latent}.
For CTFNO, we commonly use the number of heads of one and two padding dimensions.
Note that all layers are equipped with GeLU activation, except for only time-embedding layers with SiLU activation.
Other model hyperparameters for CTFNO are shown in Table \ref{tab:real_hparams}.
Moreover, we use the batch size of 50, and the Adamax optimizer \citep{kingma2014adam} with the learning rate 1e-2.


\paragraph{MuJoCo}
Overall experimental settings follow \citep{rubanova2019latent}, unless stated.
We use the latent dimension of 20, the hidden state dimension of 100 for the RNN encoder, and the training epochs of 300.
Since \citet{bilovs2021neural} did not contain the results of interpolation and prediction tasks with different subsampling ratios, we carefully trained Neural Flow, following the official implementation\footnote{\url{https://github.com/mbilos/neural-flows-experiments}}.
Following \citep{rubanova2019latent}, we employ the importance weighted likelihood loss \citep{burda2015importance}.

\paragraph{PhysioNet}
Overall experimental settings follow \citep{rubanova2019latent}, unless stated.
We use the latent dimension of 20 and the hidden state dimension of 40 for the RNN encoder.
The training epochs are 200 and 50 for the interpolation task and the prediction task, respectively.
As in \citep{rubanova2019latent}, we employ the importance weighted likelihood loss \citep{burda2015importance} without leveraging classification loss.

\paragraph{Activity}
Overall experimental settings follow \citep{rubanova2019latent}, unless stated.
We use the latent dimension of 20, the hidden state dimension of 100 for the RNN encoder, and the training epochs of 200.
For classification, the cross entropy loss was used, as in \citep{rubanova2019latent}.

\begin{table}[t] 
  \caption{Hyperparameter settings of CTFNO on real-world time-series data. Layers, Modes, and Channels are the number of Fourier layers, Fourier modes, and channels, respectively. The first value in Time channels is the dimension used for time embedding and the second value is the dimension of sinusoidal embedding of time. $M$ stands for the stabilization constant. Params denotes the number of model parameters.}
  \vspace{5pt}
  \label{tab:real_hparams}
  \centering
   \setlength\tabcolsep{5.0pt}
  \begin{tabular}{ccccccc}
    \toprule
    \multicolumn{1}{c}{Data} & \multicolumn{1}{c}{Layers} & \multicolumn{1}{c}{Modes} &\multicolumn{1}{c}{Channels} & \multicolumn{1}{c}{Time channels} & \multicolumn{1}{c}{$M$} & \multicolumn{1}{c}{Params} \\
    \midrule
    MuJoCo & \multicolumn{1}{c}{$3$} & \multicolumn{1}{c}{$10$} & \multicolumn{1}{c}{$32$} & \multicolumn{1}{c}{$(64,32)$} & \multicolumn{1}{c}{$10$} & \multicolumn{1}{c}{$150$K}\\
    PhysioNet & \multicolumn{1}{c}{$1$} & \multicolumn{1}{c}{$10$} & \multicolumn{1}{c}{$32$} & \multicolumn{1}{c}{$(64,8)$} & \multicolumn{1}{c}{-} & \multicolumn{1}{c}{$68$K} \\
    Activity & \multicolumn{1}{c}{$2$} & \multicolumn{1}{c}{$10$} & \multicolumn{1}{c}{$32$} & \multicolumn{1}{c}{$(64,32)$} & \multicolumn{1}{c}{10} & \multicolumn{1}{c}{$124$K} \\
    Plane Vibration & \multicolumn{1}{c}{$2$} & \multicolumn{1}{c}{$10$} & \multicolumn{1}{c}{$8$} & \multicolumn{1}{c}{$(16,8)$} & \multicolumn{1}{c}{$1.2$} & \multicolumn{1}{c}{$6$K}   \\
    \bottomrule
  \end{tabular}
\end{table}

\paragraph{Plane Vibration}
In this experiment, we follow the settings of \citet{xia2021heavy}.
For CTFNO, five-dimensional attributes are concatenated with input time and then embedded into 20-dimensional space by a linear transformation.
We regard temporal dimension 64 as a channel size and convert it to eight by a lifting layer.
We train the model for 500 epochs and use the learning rate of 1e-2 with decay rate 0.5 per 100 epochs.
We trained and evaluated with MSE loss.

\section{Further Results}

\subsection{Stability on Image Classification} \label{appen:classification}
We perform experiments on image classification, which is one of the benchmark applications of Neural ODEs. Although this is not a time-dependent problem, we include classification to validate the effect of the proposed stabilization scheme. 
Images could be considered as functions $a:\left[0,1\right]^2\rightarrow\bR^3$ of light defined on a continuous region of pixel locations, instead of $32\times32$ pixel vectors of RGB values.
Classification requires us to learn a dynamics that simultaneously drives each input $a\left(\mathbf{x}\right)$ to the corresponding final feature $u\left(\mathbf{x}\right)$, allocated depending on its label. 
Therefore, posing the problem as learning an operator that maps the image $a\left(\mathbf{x}\right)$ to $u\left(\mathbf{x}\right)$, we can apply FNO to image classification.

We compare the results with those of CNN and ANODE as a baselines using two benchmark 
image classification problems: 
MNIST \cite{lecun1998gradient} and CIFAR10 \cite{krizhevsky2009learning}.
All models are fit to share the similar number of parameters.

\begin{table}[h]
  \caption{Classification accuracy (\%) and robustness of different models on MNIST.}
  \label{tab:mnist}
  \vspace{5pt}
  \centering
  \setlength\tabcolsep{10.5pt}
  \scalebox{0.92}{
  \begin{tabular}{lccccccc}
    \toprule
     & Clean & \multicolumn{2}{c}{Gaussian noise} & \multicolumn{4}{c}{Adversarial attack}                  \\
 \cmidrule(lr){3-4} \cmidrule(lr){5-8}
    \multicolumn{1}{c}{Model} & & $\sigma=50$ & $\sigma=100$ & FGSM-30/255 & FGSM-50/255 & PGD-30/255 & PGD-50/255 \\
    \midrule
    CNN              & \textbf{99.0} & 29.7 & 10.4 & 27.3 & 15.3 & 10.8 & 4.6 \\
    ANODE            & 97.9 & 91.8 & 56.4 & 26.9 & 1.5 & 62.5 & 9.7 \\
    FNO              & 98.6 & 90.2 & 45.6 & 36.9 & 25.8 & 8.1 & 4.1 \\ 
    FNO (WD) & 98.9 & 90.0 & 43.6 & \textbf{57.6} & \textbf{35.6} & 22.1 & 11.7 \\ 
    FNO (Stab) & 98.8 & \textbf{97.8} & \textbf{88.2} & 36.2 & 3.43 & \textbf{62.6} & \textbf{47.1} \\
    \bottomrule
  \end{tabular}}
  \vspace{-5pt}
\end{table}
\begin{table}[h]
  \caption{Classification accuracy (\%) and robustness of different models on CIFAR10.}
  \label{tab:cifar10}
  \vspace{5pt}
  \centering
  \setlength\tabcolsep{12pt}
  \scalebox{0.92}{
  \begin{tabular}{lccccccc}
    \toprule
     & Clean & \multicolumn{2}{c}{Gaussian noise} & \multicolumn{4}{c}{Adversarial attack}                  \\
 \cmidrule(lr){3-4} \cmidrule(lr){5-8}
    \multicolumn{1}{c}{Model} & & $\sigma=15$ & $\sigma=20$ & FGSM-2/255 & FGSM-5/255 & PGD-2/255 & PGD-5/255    \\
    \midrule
    CNN     & \textbf{75.5} & 60.0 & 51.1 & 23.0 & 4.7 & 16.7 &    0.43  \\
    ANODE            & 59.4 & 40.9 & 34.9 & 0.1    & 0.0   & 0.0   & 0.0  \\
    FNO         & 64.0 & 57.9 & 52.2 & 3.4 & 1.4 & 1.5 & 0.0  \\ 
    FNO (WD) & 66.9 & 54.9 & 46.2 & 5.8 & 0.6 & 2.0 & 0.0 \\ 
    FNO (Stab) & 65.5 & \textbf{64.3} & \textbf{63.3} & \textbf{38.6} & \textbf{11.1} & \textbf{38.1} & \textbf{9.1} \\
    \bottomrule
  \end{tabular}}
  \vspace{-5pt}
\end{table}

\paragraph{Robustness on Adversarial Attack}
By ensuring the stability of the learned PDE in Section \ref{sec:Stab}, we may expect that the resulting solution for input with a small perturbation converges to the same label as the unperturbed one. 
To investigate this stabilization effect on FNO, 
we test the trained model in defending against Gaussian noise, fast gradient signed method (FGSM) \citep{goodfellow2014explaining}, and projected gradient descent (PGD) \citep{madry2017towards} attack. 
Weight decay (WD) is a standard practical technique to regularize the $L^2$ norm of weights of models.
Because WD has a similar intention to our stabilization scheme, these two are also compared.
For a fair comparison, CNN and ANODE are trained with WD.
Tables \ref{tab:mnist} and \ref{tab:cifar10} show that for both datasets, our stabilization significantly enhances the robustness of the model. 
Our model achieves better results in both accuracy and robustness than ANODE. Although CNN produces around 10\% higher accuracy in CIFAR10, our model outperforms CNN for perturbed images.
Moreover, the results indicate that FNO trained with WD is prone to mislead by adversarial attacks. 
This demonstrates that it is much more accurate to apply a model-suitable stabilization based on the PDE theory, rather than naively regularizing the $L^2$ norm of the weights.
\paragraph{Effect of $M$} \label{appen:stab}
As we discussed in Section \ref{sec:Stab}, the upper bound $M$ on the  $\left\Vert R\left(t,\cdot\right)\right\Vert _{2}$ and $\left\Vert W\left(t,\cdot\right)\right\Vert _{2}$ emanates from the necessity of stabilizing the proposed model.
$M$ is a user-defined hyperparameter. Here, we analyze the performance of FNO to illustrate how increasing stabilization parameter $M$ leads to a more robust model. Experiments are carried out on CIFAR10 and we perturb images by PGD-5/255 attack.
The left panel of Figure \ref{fig:acc} shows the accuracy and robustness of the learned models for several values of $M$.
We can see that imposing stability does come with trade-offs between test accuracy and robustness on the attack.
The accuracy is higher at low $M$ and it becomes decreasing as $M$ increases, which makes the model more robust on various attacks. Moreover, the models with weak stabilization show brittle training procedures.
Also, the model degrades the overall performance as stabilization effects become dominant.
Results certify that small corruptions or extra noise in the input are not likely to change the output of the network with proper stabilization.

Moreover, the stability of the model is related to how well the model generalizes on the data on which it has not trained, which in turn is related to overfitting.
This can be confirmed in Figure \ref{fig:acc} (right).
It presents the evolution of the loss function of models with and without stabilization.
As one can see, after some iterations, the test accuracy of the model without stabilization has started to decrease.
This means that the longer we train the model, the more specialized the weights will become to the training data. It is evidence of overfitting.
On the other hand, the model with stabilization keeps increasing its test accuracy.
This confirms that the stabilization prevents overfitting and helps the model to work better on unseen data.

\begin{figure}[h]
	\centering{}
         \includegraphics[width=0.99\textwidth]{./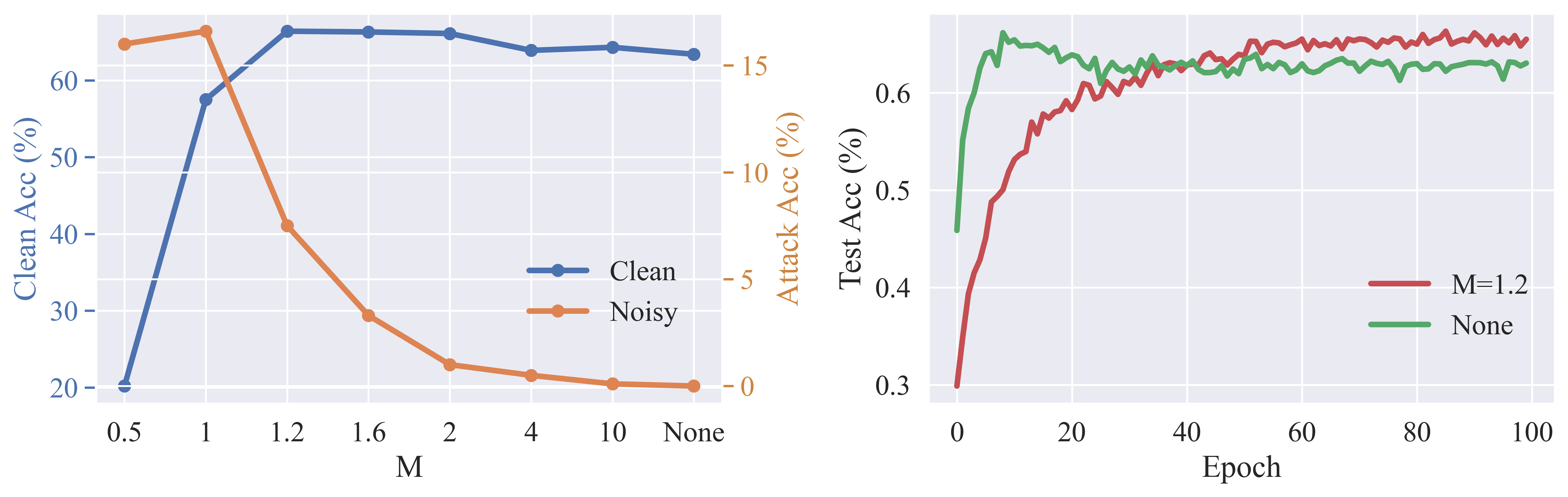}
\caption{The Trade-off between the accuracy and robustness on PGD-5/255 attack (left) and observation for overfitting (right). Tests are conducted on CIFAR10 (\cref{sec:exp_stab}).}
\label{fig:acc}
\end{figure}

The relation between the stabilization and overfitting was also discussed in Section \ref{sec:exp_stab} on the Plane Vibration dataset. By increasing $M$, the training MSE loss gradually decreases, while the test MSE eventually goes up (see Figure \ref{fig:pv}). Figure \ref{fig:pv_stab} plots the distribution of the $L^2$ norm of weights of trained CTFNO with $M=1.2$ and $10$ on the plane vibration data.
The weights of the model trained with $M=10$ have larger $L^2$ norms. 
As we analyzed in Theorem \ref{prop:stab}, the weights that have a large spectrum make the network unstable.
More precisely, the spectrum of the weights grows in size to handle the specifics of the training data. 
As the weights become specialized to the training data, overfitting occurs.
On the other hand, the trained model with $M=1.2$ learned weights with a spectrum smaller than 1. It forces the network to have small changes in output for small changes in the inputs, which gives more ability to generalize better.

\begin{figure}[h]
    \centering
    \includegraphics[width=0.99\textwidth]{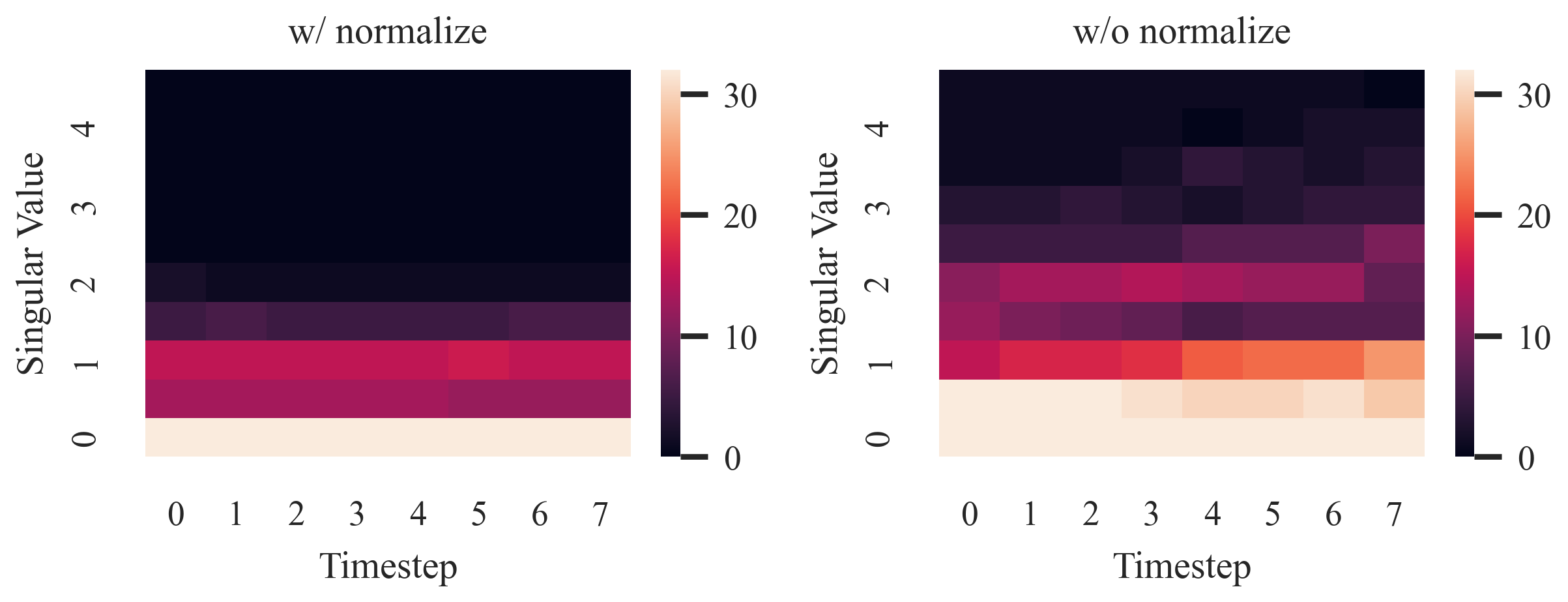}
    \caption{Plot of the singular values of weights of CTFNO with (left) and without (right) stabilization on the plane vibration dataset (\cref{sec:exp_stab}).}
    \label{fig:pv_stab}
\end{figure}

\paragraph{Generalization to resolution}
\begin{wraptable}{r}{0.40\textwidth}
    \centering
    \vspace{-15pt}
     \setlength\tabcolsep{3.0pt}
     \caption{Resolution invariance of FNO.} \label{tab:resol_invariance}
     \vspace{5pt}
    \begin{tabular}{ccccc}
    \toprule
    Dataset & Original & \multicolumn{3}{c}{Resolution}\\
    \cmidrule(lr){3-5}
         &  & $\times 2$ & $\times 4$ & $\times 8$ \\
    \midrule
        MNIST   &98.8 & 98.6 & 98.6 & 98.6\\
        CIFAR10 & 65.5 & 64.8 & 64.6 & 64.7 \\
    \bottomrule
    \end{tabular}
    \vspace{-5pt}
\end{wraptable} 
Classical neural networks that map between finite-dimensional Euclidean spaces are grid-dependent and thus cannot be generalized over image resolution. By viewing images as functions, FNO is not tied to a specific discretization and can be applied to arbitrary discretization of $a\left(\mathbf{x}\right)$.
To validate the resolution invariance of FNO, we classify high-resolution images using FNO trained on the original low-resolution images.
Table \ref{tab:resol_invariance} validates that FNO is able to learn from coarse images and generalize to higher resolutions.   

\paragraph{Experimental Details}
All of the experiments use gradient clipping of 10 and cross-entropy loss. We use a learning rate of 1e-3 and weight decay of 1e-4 for ANODE and CNN. Other implementation details of ANODE follow \citet{dupont2019augmented}. We use five convolutional layers in CNN experiments. For MNIST, we apply a kernel size of seven for the first two convolutional layers and 32 channels. For CIFAR10, we use a kernel size of seven for the first convolutional layers and 64 channels. Other convolutional layers have a kernel size of three. We train CNN for 30 epochs. Note that the accuracy of both ANODE and CNN tends to decrease after the designated number of epochs.

In FNO experiments, FNO with weight decay (WD) and FNO with normalization (Stab), we train models for 100 epochs with the learning rate of 5e-3.
We used three Fourier layers with six Fourier modes.
For MNIST, we employ the number of channel of 32, and three-dimensional zero padding.
For CIFAR10, we employ the number of channel of 64, and four-dimensional zero padding.
For the experiments with decay, we choose the decay rates 1e-6 and 1e-4 for MNIST and CIFAR10, respectively.
For the experiments with normalization, we employ the spectral norm of 1.2 for the normalization bounds.
Note that the GeLU activation function was used for both CNN and FNO experiments.

To estimate the robustness of the trained model, we consider three commonly-used perturbation schemes, namely random Gaussian perturbations, FGSM, and PGD attacks.
For random Gaussian attack, we use zero mean Gaussian noise with a standard deviation of 50/255, 100/255 for MNIST, and 15/255, 20/255 for CIFAR10.
For adversarial attacks, we use 30/255, 50/255 attacks for MNIST, and 2/255, 5/255 for CIFAR10.
The number of steps of PGD attack is 10.

\begin{table}
    \caption{The number of parameters for each models for image classification.}
    \vspace{5pt}
    \centering
    \begin{tabular}{cccc}
    \toprule
        Data & CNN & ANODE & FNO \\
    \midrule 
        MNIST   & 78K  & 83K  & 66K \\
        CIFAR10 & 154K & 168K & 153K \\
    \bottomrule
    \end{tabular}
\end{table}

\subsection{Speed improvements}
We showed the average wall clock time measured on heat equation in \cref{tab:Time_elapsed_heat}.
Compared to FNO-RNN, which autoregressively predicts solutions, CTFNO takes only 12\% of the training time. Also, CTFNO takes only 1/3 of the training time and 1/6 of inference time of FNO-2D.
In addition, to investigate how much our model reduced computational time in real application, We carefully measured average execution time on the MuJoCo experiment (\cref{sec:real}) and results are reported in Table \ref{tab:mujoco}.
The main reason for this difference is that numerical integration used in LODE is computationally expensive, and Neural Flow and CTFNO are neural operators that directly represent the solution trajectory of ODE and PDE, respectively.
In other words, the neural operator's replacement of the role of ODE solvers has contributed significantly to overcoming the increased complexity of PDEs compared to ODEs and reducing time costs.

\begin{table}[h]
    \centering
    \setlength\tabcolsep{5.0pt}
    \vspace{-5pt}
    \caption{Measured training and inference time on MuJoCo dataset} \label{tab:Time_elapsed}
    \vspace{5pt}
        \begin{tabular}{ccc}
        \toprule
        Model & Training (second/epoch) & Inference (second/epoch) \\
        \midrule
        LODE  & 23.6 & 2.20 \\
        Neural Flow & 4.38 & 0.62 \\
        CTFNO & 7.39 & 0.89 \\
        \bottomrule
        \end{tabular}
\end{table}

\newpage
\subsection{Additional Synthetic Results} \label{appen:heatmaps_ode_heat_burgers}
\paragraph{DE-based models for learning spatial-dependencies}
\begin{wraptable}{r}{0.3\textwidth}
    \centering
    \vspace{-12pt}
      \setlength\tabcolsep{4.2pt}
    \caption{Additional RMSE ($\times10^{-2}$) results on function valued data.} \label{tab:NODE for PDE}
    \vspace{5pt}
     \scalebox{0.95}{
    \begin{tabular}{ccc}
    \toprule
    Model & Reaction & Heat \\ 
    \midrule
    Neural Flow    & 10.300 & 3.038 \\
    Neural Laplace & 2.804 & 2.205\\ 
    \midrule
    FNO-2D      &0.326& 0.0330 \\ 
    CTFNO         & 0.239&  0.0269 \\ 
    \bottomrule
  \end{tabular}}
    \vspace{-5pt}
\end{wraptable} 
In \cref{sec:synthetic}, we investigated the expressivity of DE-based models on various temporal-evolving systems. Here, we additionally show that conventional DE-based models that do not consider the relation between spatial variables result in degraded performance when learning reaction ODE and heat equation. In this case, we only consider operator learning models: Neural Flow and Neural Laplace. 
Compared with FNO-2D and our CTFNO, it can be seen that the PDE surrogates approximate the solution operator of the heat equation much better than the ODE solution operator (Neural Flow) and Laplace-transform based approach (Neural Laplace). In addition, we test FNO-2D on reaction ODE. Contrary to the degraded performance of Neural Flow and Neural Laplace on PDEs, FNO-2D learns the reaction ODE fairly well.

\paragraph{Results on Extrapolation}
We investigate the generalization ability of our method to time points beyond those that were used for training on the two-dimensional spiral data in Section \ref{sec:synthetic}. Predicted and extrapolated trajectories are depicted in Figure \ref{fig:spiral_extrap} together with the ground truth. Each model is trained to predict spiral trajectories at 100-time points (blue lines in Figure \ref{fig:spiral_extrap}). And we run trained models to forecast 500 future time steps (red lines in Figure \ref{fig:spiral_extrap}). We can observe that our CTFNO is better at generalizing for extrapolation compared to baseline models.
Our CTFNO correctly extrapolates the spiral trajectory beyond the training time interval, converging to the equilibrium of the spiral.
From Table \ref{tab:synthetic}, we have seen that CTFNO best approximates the spiral compared to the benchmarks, achieving the lowest RMSE,
The results depicted in Figure \ref{fig:spiral_extrap} confirm that CTFNO accurately captures the dynamics of the spiral during training and predicts well even for non-training times, based on memories analyzed in the past.

\begin{figure}
	\centering{}
         \includegraphics[width=0.7\textwidth]{./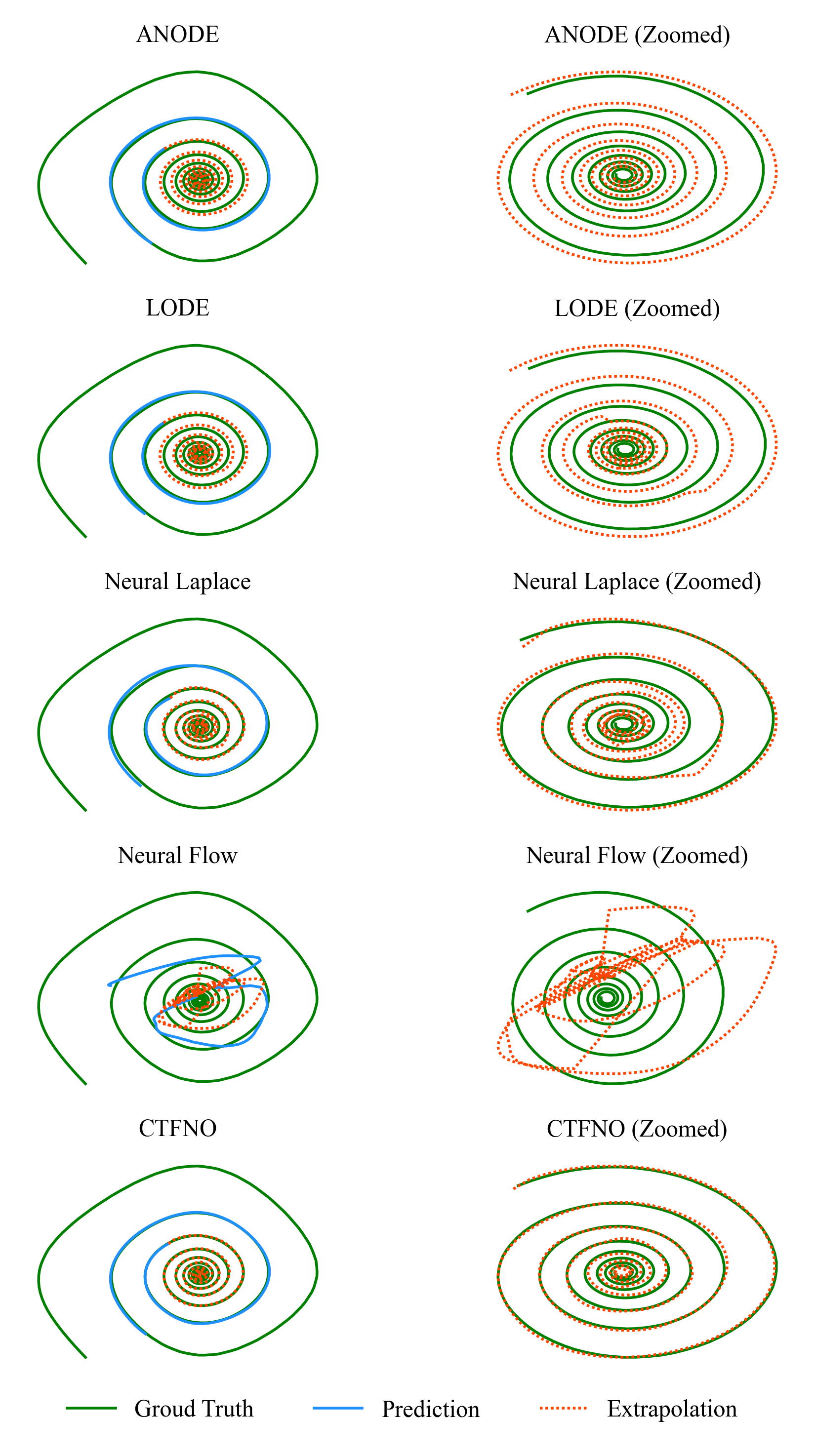}
\caption{(Left) Prediction and extrapolation for beyond the training time points of the two-dimensional spiral (\cref{sec:synthetic}), and (Right) the zoomed extrapolation trajectories with a comparison to ground truth. CTFNO performs best not only for prediction but also for extrapolation.}
\label{fig:spiral_extrap}
\end{figure}

\paragraph{Robust to noisy observations }
We examine the ability of our CTFNO to handle noisy observation.
For experiments, the vanilla and the stabilized CTFNOs with $M=1$ are both trained on sine trajectories and sawtooth data. 
To test the robustness of the models against noise, we deploy the trained models on noisy test data, which is corrupted by additional standard Gaussian noise $\cN\left(0,I\right)$.
\Cref{fig:stab,fig:saw_stab} illustrate the predicted trajectories of both models.  
The sine trajectory considered here has a decreasing amplitude over time. 
Compared to the vanilla model, the perturbation of the test data set does not cause a huge change in the prediction of the stabilized CTFNO.
We can see that the stabilized model recovers the original trajectory much more closely. Similar to the true sine trajectory, the amplitude decreases over time, and the period is constant in \cref{fig:stab}. 
\begin{figure}[h]
\begin{center}
  \includegraphics[width=0.95\textwidth]{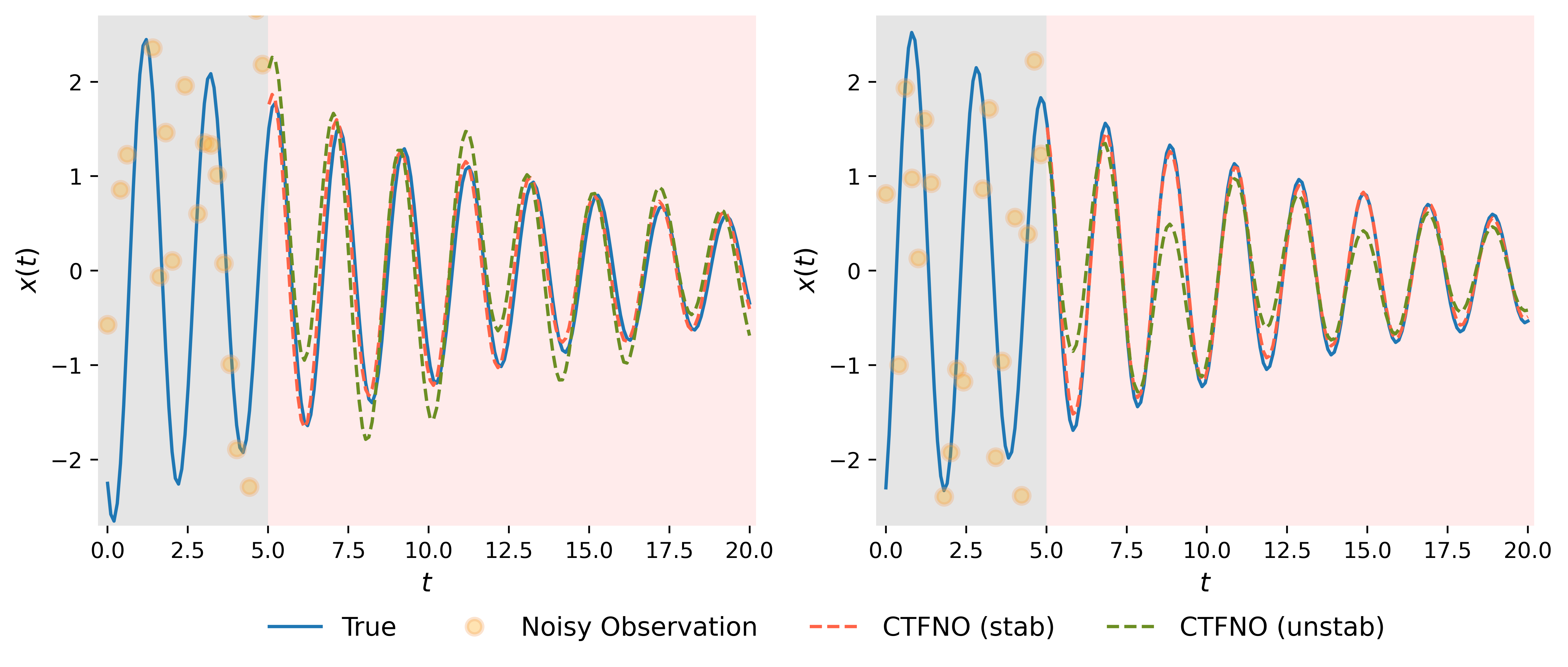}
  \end{center}
  \caption{Predicted sine trajectories for CTFNOs w/ and w/o stabilization over noisy observations.}
  \vspace{-10pt}
  \label{fig:stab}
\end{figure}

\begin{figure}[h]
\begin{center}
  \includegraphics[width=0.95\textwidth]{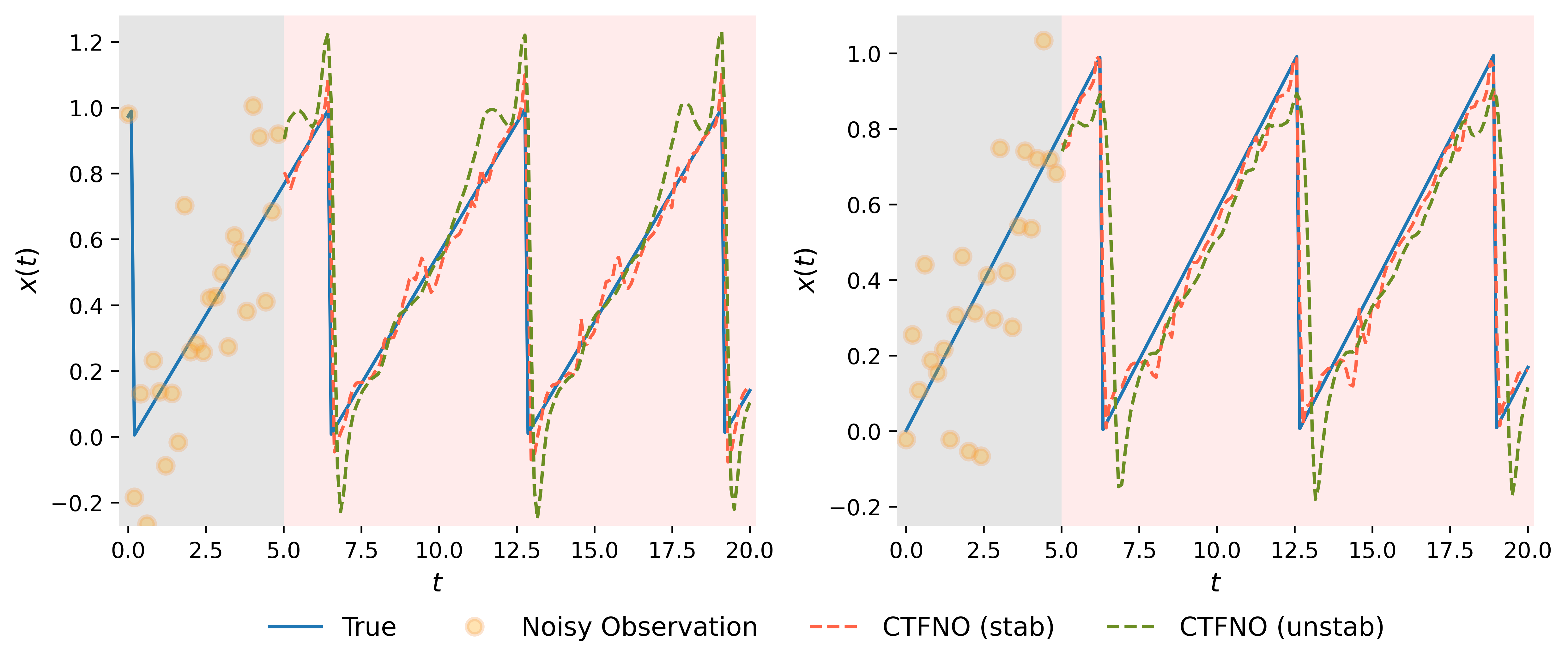}
  \end{center}
  \caption{Predicted sawtooth trajectories for CTFNOs w/ and w/o stabilization over noisy observations.}
  \vspace{-10pt}
  \label{fig:saw_stab}
\end{figure}

\paragraph{Additional Heatmaps}
We also include additional heatmaps for reaction (\cref{sec:synthetic}), heat, Burgers', diffusion-sorption, and compressible Navier-Stokes equations (\cref{sec:pde}). See Figures \ref{fig:ode_total}, \ref{fig:heat_total}, \ref{fig:burgers_total}, \ref{fig:diffsor_total}, and \ref{fig:NS_total}.
\begin{figure}
	\centering{}
         \includegraphics[width=0.93\textwidth]{./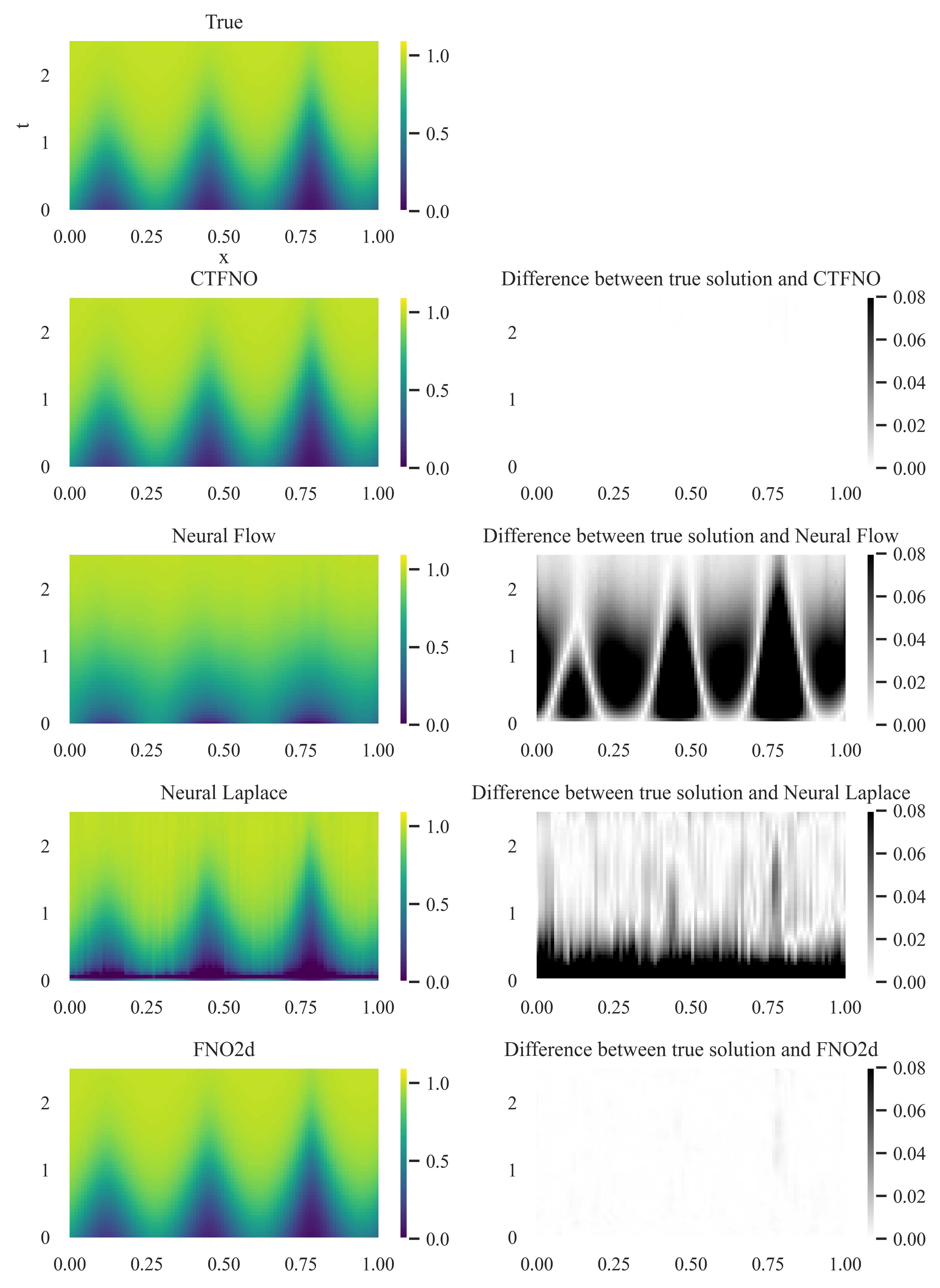}
\caption{Heatmap of exact and predicted solution for reaction ODE (\cref{sec:synthetic}). This shows that our model can exactly represent the dynamics of the reaction ODE.}
\label{fig:ode_total}
\end{figure}

\begin{figure}
	\centering{}
         \includegraphics[width=0.93\textwidth]{./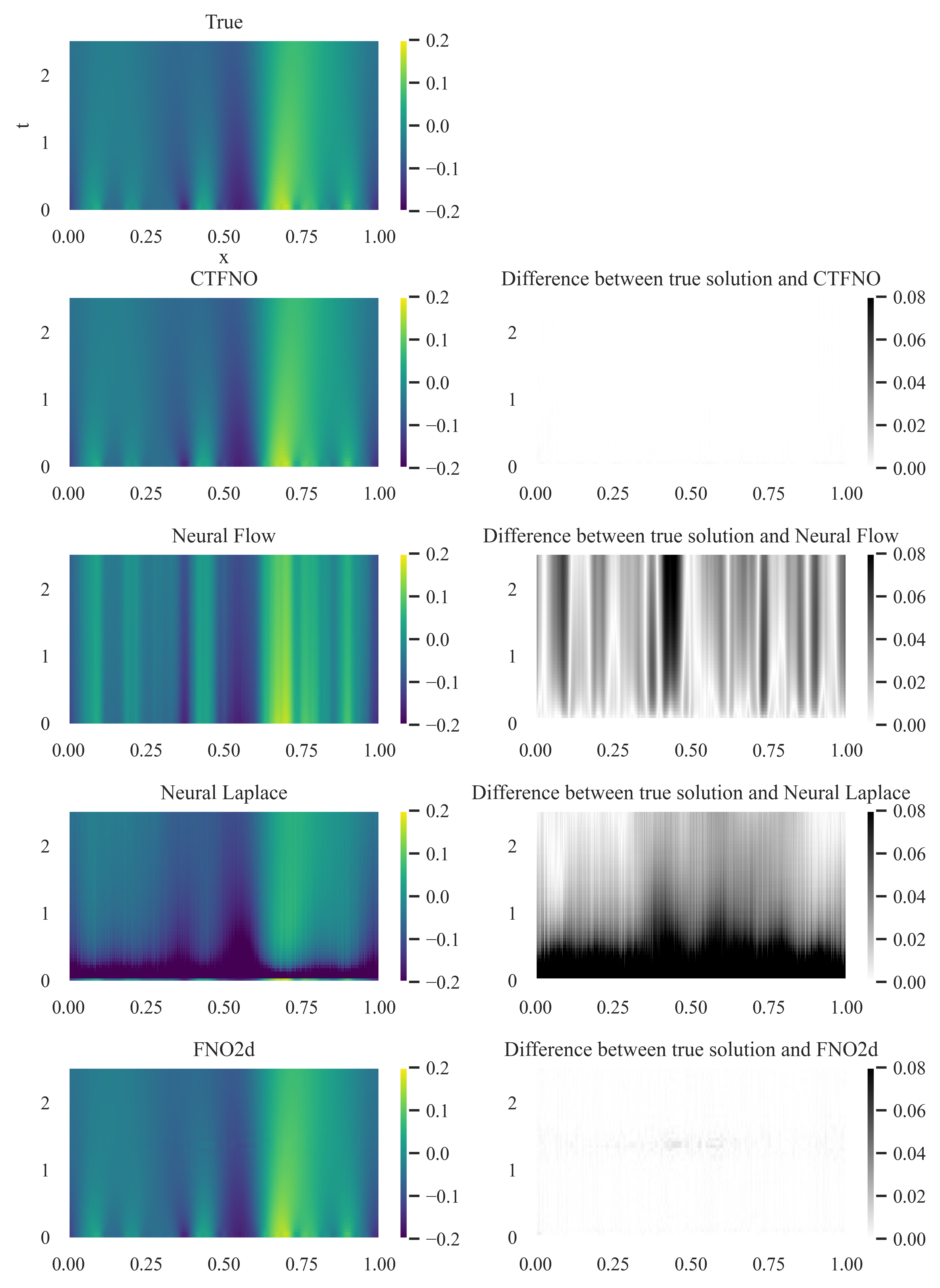}
\caption{Heatmap of exact and predicted solution for heat equation (\cref{sec:pde}).}
\label{fig:heat_total}
\end{figure}

\begin{figure}
	\centering{}
         \includegraphics[width=0.93
         \textwidth]{./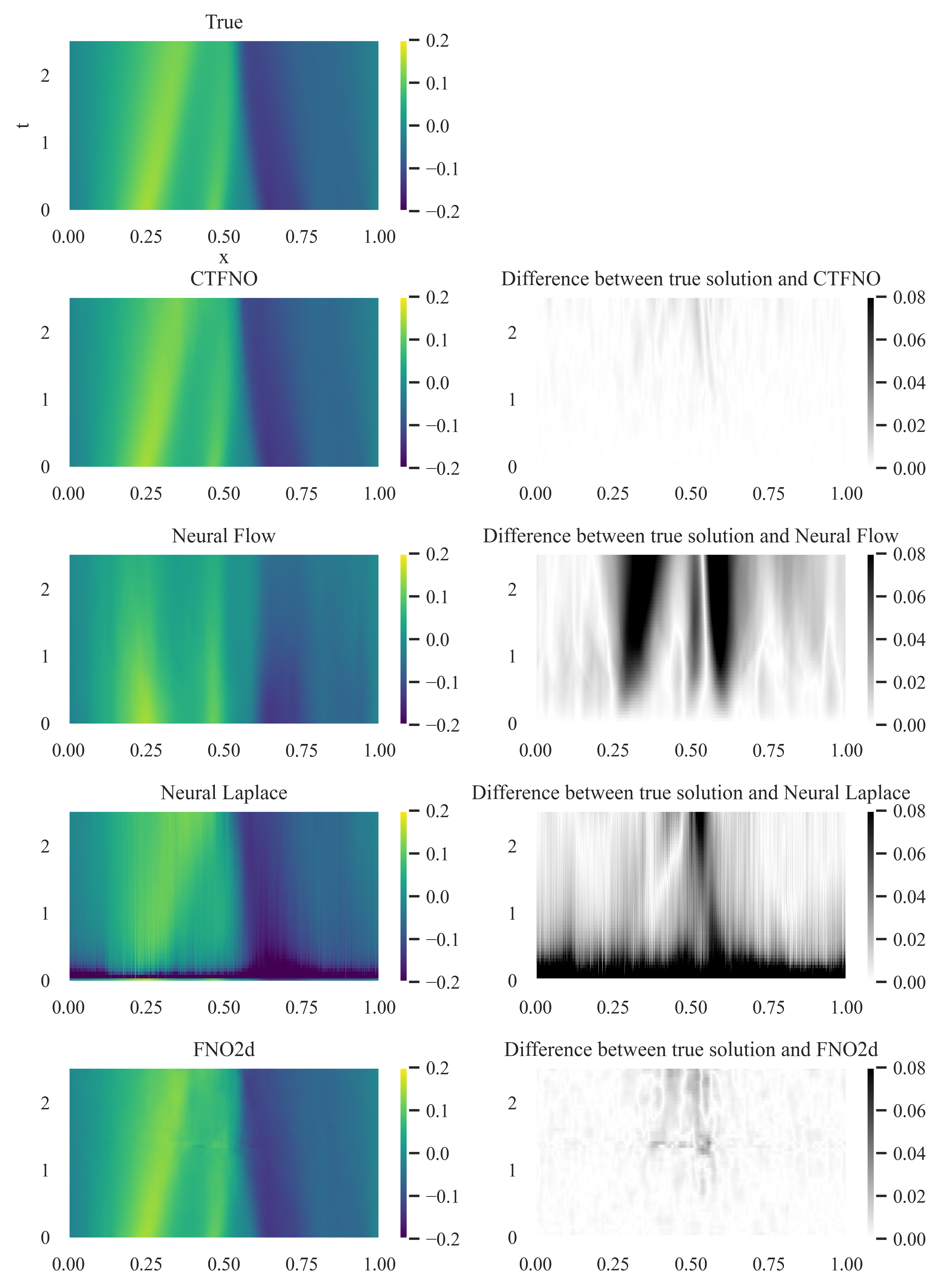}
\caption{Heatmap of exact and predicted solution for Burgers' equation (\cref{sec:pde}).}
\label{fig:burgers_total}
\end{figure}

\begin{figure}
	\centering{}
         \includegraphics[width=0.93
         \textwidth]{./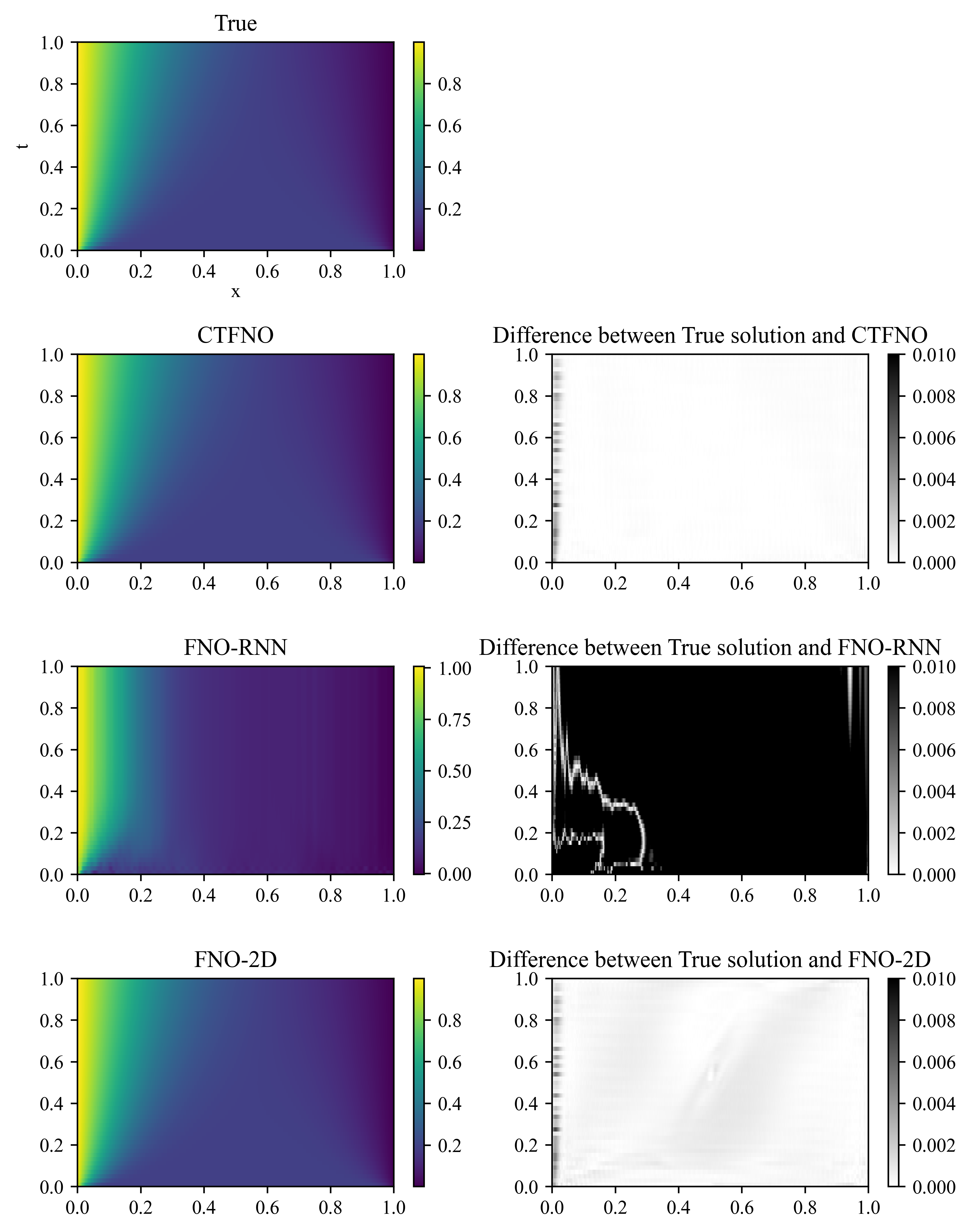}
\caption{Heatmap of exact and predicted solution for diffusion-sorption equation (\cref{sec:pde}).}
\label{fig:diffsor_total}
\end{figure}

\begin{figure}
	\centering{}
         \includegraphics[width=0.93
         \textwidth]{./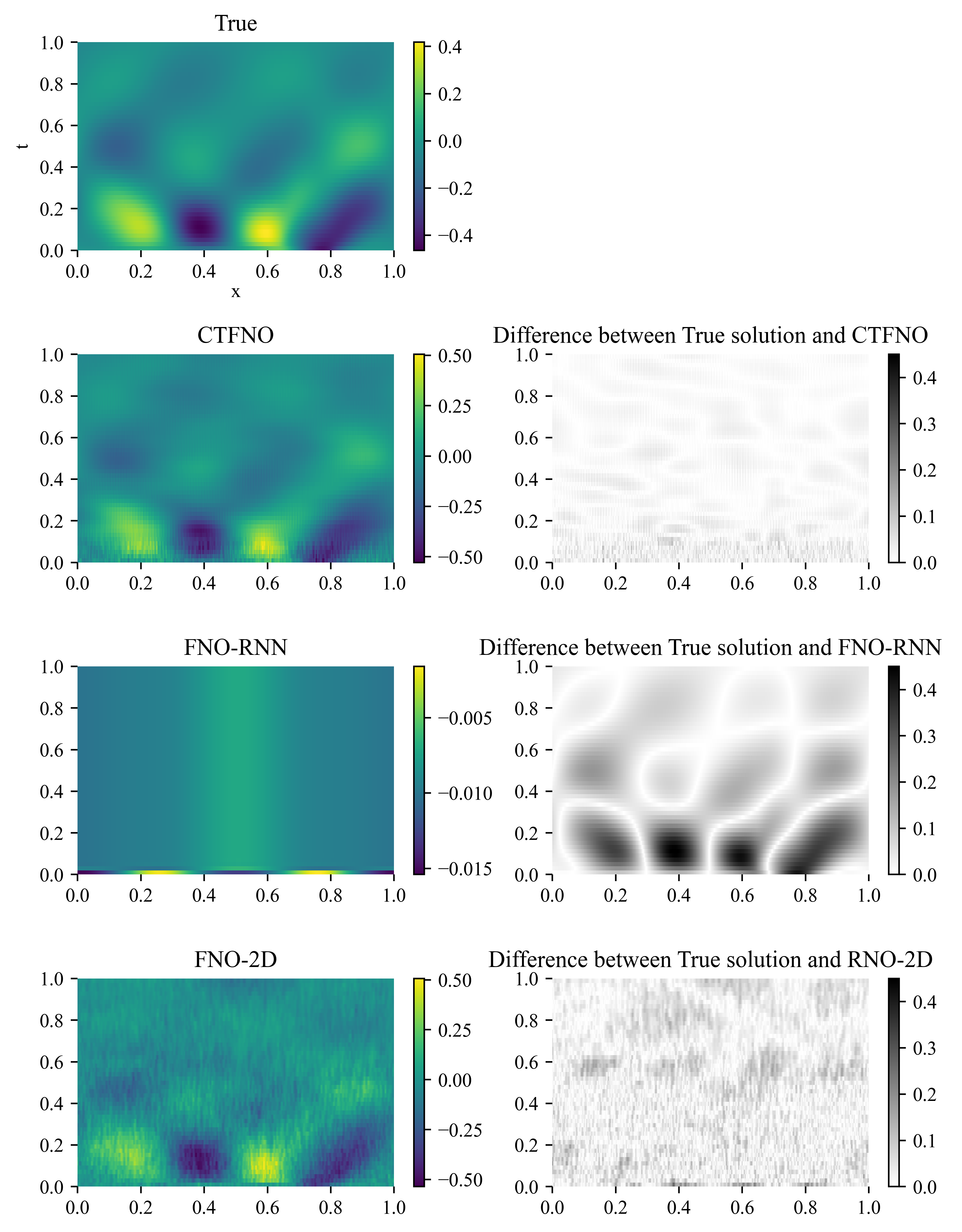}
\caption{Heatmap of exact and predicted solution for compressible Navier-Stokes equations (\cref{sec:pde}).}
\label{fig:NS_total}
\end{figure}

\section{Broader Impact}
We introduced a new framework for modeling temporal dynamics of observed data. It has a wide range of potential applications, some of which we investigated in this paper. We explored a healthcare dataset and here we hope to bring affirmative impact in medical applications. We also examined the modeling the vibration of an airplane and we expect it to contribute to the advancement of airplane designs and civil engineering. Furthermore, as many problems have arisen in sciences and engineering tied with complex PDE systems, we expect that our work has potential applicability in the enormous area such as climate forecasting, epidemics, molecular simulations, micro-mechanics, and modeling turbulent flows. PDEs can also be applied in the development of military equipment. As with all numerical methods, however, it is not a work of developing a technique to go to warfare, and we hope and encourage users of our model to concenter on the positive impact of this work.

\end{document}